\newtheorem{definition}{Definition}
\newtheorem{lemma}{Lemma}
\newtheorem{theorem}{Theorem}
\newtheorem{Proposition}{Proposition}
\def\BibTeX{{\rm B\kern-.05em{\sc i\kern-.025em b}\kern-.08em
    T\kern-.1667em\lower.7ex\hbox{E}\kern-.125emX}}
\begin{document}
 \title{Tiny Multi-Agent DRL for Twins Migration in UAV Metaverses: A Multi-Leader Multi-Follower Stackelberg Game Approach}

\author{Jiawen Kang, Yue Zhong, Minrui Xu, Jiangtian Nie, Jinbo Wen, Hongyang Du,  Dongdong Ye, \\Xumin Huang, Dusit Niyato, \textit{Fellow, IEEE},  Shengli Xie, \textit{Fellow, IEEE}
\thanks{
This work was supported in part by NSFC under grant No. 62102099 and  U22A2054, and the Pearl River Talent Recruitment Program under Grant 2021QN02S643, and Guangzhou Basic Research Program under Grant 2023A04J1699, in part by Guangzhou Basic Research Program under Grant 2023A04J0340, in part by Energy Research Test-Bed and Industry Partnership Funding Initiative, Energy Grid (EG) 2.0 programme, DesCartes and MOE Tier 1 (RG87/22). (\textit{*Corresponding author: Xumin Huang})

Jiawen Kang is with the School of Automation at Guangdong University of Technology (GDUT) and also 111 Center for Intelligent Batch Manufacturing based on IoT Technology, Guangzhou 510006, China (e-mail: kavinkang@gdut.edu.cn). Yue Zhong is with the School of Automation at GDUT and also Key Laboratory of Intelligent Information Processing and System Integration of IoT, Ministry of Education, Guangzhou 510006, China (e-mail: 3220001516@mail2.gdut.edu.cn). Dongdong Ye is with the School of Automation at GDUT and also Guangdong-HongKong-Macao Joint Laboratory for Smart Discrete Manufacturing, Guangzhou 510006, China (e-mail: dongdongye8@163.com). Xumin Huang is with the School of Automation at GDUT and also Key Laboratory of Intelligent Detection and IoT in Manufacturing, Ministry of Education, Guangzhou 510006, China (e-mail: huangxu\_min@163.com). Shengli Xie is with the School of Automation at GDUT and also Guangdong Key Laboratory of IoT Information Technology, Guangzhou 510006, China (e-mail: shlxie@gdut.edu.cn).


Minrui Xu, Jiangtian Nie, Hongyang Du, and Dusit Niyato are with the School of Computer Science and Engineering, Nanyang Technological University, Singapore, Singapore (e-mail: minrui001@e.ntu.edu.sg; jnie001@e.ntu.edu.sg; hongyang001@e.ntu.edu.sg; DNIYATO@ntu.edu.sg).


Jinbo Wen is with the Computer Science and Technology, Nanjing University of Aeronautics and Astronautics, Nanjing, China (e-mail: jinbo1608@163.com).
}
}

\maketitle

\begin{abstract}
The synergy between Unmanned Aerial Vehicles (UAVs) and metaverses is giving rise to an emerging paradigm named UAV metaverses, which create a unified ecosystem that blends physical and virtual spaces, transforming drone interaction and virtual exploration. UAV Twins (UTs), as the digital twins of UAVs that revolutionize UAV applications by making them more immersive, realistic, and informative, are deployed and updated on ground base stations, e.g., RoadSide Units (RSUs), to offer metaverse services for UAV Metaverse Users (UMUs). Due to the dynamic mobility of UAVs and limited communication coverages of RSUs, it is essential to perform real-time UT migration to ensure seamless immersive experiences for UMUs. However, selecting appropriate RSUs and optimizing the required bandwidth is challenging for achieving reliable and efficient UT migration. To address the challenges, we propose a tiny machine learning-based Stackelberg game framework based on pruning techniques for efficient UT migration in UAV metaverses. Specifically, we formulate a multi-leader multi-follower Stackelberg model considering a new immersion metric of UMUs in the utilities of UAVs. Then, we design a Tiny Multi-Agent Deep Reinforcement Learning (Tiny MADRL) algorithm to obtain the tiny networks representing the optimal game solution. Specifically, the actor-critic network leverages the pruning techniques to reduce the number of network parameters and achieve model size and computation reduction, allowing for efficient implementation of Tiny MADRL.  {Numerical results demonstrate that our proposed schemes have better performance than traditional schemes.}
\end{abstract}

\begin{IEEEkeywords}
Metaverses, UAV twins, Stackelberg game, multi-agent deep reinforcement learning, pruning techniques.
\end{IEEEkeywords}

\section{Introduction}
Unmanned Aerial Vehicles (UAVs) have garnered significant prominence in contemporary times owing to their multifaceted utility in expanding the scope of mobile communications, enhancing surveillance capabilities, and contributing to the development of intelligent urban environments \cite{hu2022digital}. The expeditious advancement of state-of-the-art technologies, encompassing blockchain, Artificial Intelligence (AI), and eXtended Reality (XR), coupled with the ubiquity of mobile devices, portends the emergence of metaverses as a transformative paradigm for the forthcoming generation \cite{kangblockchain}. This evolution is anticipated to facilitate interpersonal engagements and interactions with virtual entities within virtual environments \cite{luoprivacy}.
UAV metaverses are considered hybrid immersive realms that combine XR technologies with real-time sensing data collected by UAVs. The metaverses offer customized and intelligent services, such as smart farming and UAV-virtual games, to the users who engage with them, known as UAV Metaverse Users (UMUs). 
 {UAV Twin (UT) plays a crucial role in UAV metaverses, serving as an essential component that encompasses precise and comprehensive digital replicas of UAVs \cite{Jinbo}.} The UTs are instrumental in ensuring synchronization and real-time updates for UMUs, bridging the gap between physical and virtual spaces \cite{luoprivacy}. Consequently, users can seamlessly access up-to-date information and immerse themselves in captivating experiences within UAV metaverses.

 {Executing high-fidelity and low-latency rendering tasks for diverse UTs necessitates substantial allocations of communication, computing, and storage resources \cite{chen,wang2022task}. UAVs may not efficiently build high-fidelity virtual models, the computationally intensive tasks need to be offloaded to ground base stations, e.g., RoadSide Units (RSUs) equipped with edge servers, that have adequate computing and bandwidth resources\cite{zhang2023learning, duresource}.} Each RSU can maintain multiple UTs to provide UMUs with multiple metaverse services. Owing to the flexibility and dynamic mobility of UAVs\cite{wen2023freshness}, the distance between the UAVs and the RSUs in which their UTs are currently deployed is likely to become more and more remote. 
Moreover, the constrained communication coverages of RSUs pose a challenge in ensuring the uninterrupted provision of metaverse services by UAVs to UMUs through various UTs \cite{chen}.  {Therefore, it is crucial to achieve the seamless migration of UTs from current RSUs to other RSUs to ensure uninterrupted and immersive service experiences for UMUs.
However, to obtain a reliable and efficient UT migration, it is challenging to select suitable RSUs and optimize the required bandwidth. Specifically, the RSUs are responsible for providing bandwidth resources for UT migration and autonomously determining their bandwidth prices. Subsequently, UAVs select the target RSUs for UT migration and determine the bandwidth demand by considering the pricing strategies established by the RSUs. }


 {Tiny Machine Learning (TML) is dedicated to crafting and advancing machine learning techniques tailored for execution on embedded systems and Internet of Things (IoT) devices \cite{Disabato}. The methods harness approximation and pruning strategies to efficiently diminish the computational load and memory demands inherent in machine learning algorithms \cite{Disabato}. To address the above challenges, we propose a tiny learning-based game approach framework based on the pruning techniques for efficient UT migration in UAV metaverses. We formulate a multi-leader multi-follower Stackelberg game model to ensure efficient resource allocation between RSUs and UAVs within UT migration.
In Stage I, RSUs acting as the leaders establish the bandwidth selling prices. The prices are determined by considering the expected budgets of UAVs and the pricing strategies of other RSUs. 
During Stage II, the follower UAVs assess the required bandwidth and make purchasing decisions from various RSUs based on the pricing strategies established in Stage I.} To summarize, this paper introduces the following key contributions:
\begin{itemize}
    \item  {We formulate a multi-leader multi-follower Stackelberg game model to capture the interaction between RSUs and UAVs.
    To better measure the perception of metaverse services from UMUs, we innovatively incorporate a new immersion metric of UMUs into the utility function of UAVs. The immersion metric introduces a psychological factor into the formulation of utility functions, creating a more comprehensive and effective way to quantify the utility of UAVs. By integrating the immersion metric into the utilities of UAVs, our goal is to gain a more detailed understanding of the experience of UMUs in UAV metaverses. It can ultimately improve the adaptability of the utility function of UAV.}
   
    \item  {We propose a Tiny Multi-Agent Deep Reinforcement Learning (Tiny MADRL) algorithm based on pruning techniques, which is designed to approximate the Stackelberg equilibrium in a more computationally efficient and scalable manner. The tiny MADRL algorithm can efficiently derive the optimal game strategy in practical scenarios for achieving reliable UT migration.}
    By combining DRL with pruning techniques, our proposed algorithm can effectively handle intricate and dynamic environments, thus enhancing the overall performance in resolving the Stackelberg game for UT migration.
 
    \item  {We employ the Tiny MADRL algorithm to achieve optimal strategies of RSUs that converge faster and closer to the Stackelberg equilibrium than traditional DRL algorithms such as the Proximal Policy Optimization (PPO) algorithm.}
    This highlights the effectiveness and reliability of the proposed scheme.
\end{itemize}

The structure of the paper is as follows. Section \ref{Related} provides a review of the related work. In Section \ref{System}, we present the proposed framework for UT migration in UAV metaverses, which utilizes a tiny learning-based game approach and incorporates pruning techniques. This section also introduces the multi-leader multi-follower Stackelberg model. Section \ref{Algorithm} outlines the proposed Tiny MADRL algorithm, which aims to achieve the Stackelberg equilibrium and obtain an optimal game solution. The numerical results are presented in Section \ref{Results}. Finally, Section \ref{Conclusion} concludes this paper.

\section{Related Work}\label{Related}
\subsection{Metaverses}
The word \textit{``metaverses"} was first derived from the novel \textit{Snow Crash}, published in 1992, which depicted a virtual universe scene projecting the duality of the natural world and a copy of the digital environment \cite{stephenson2003snow}. The metaverse represents a convergence of physical and virtual spaces, surpassing the confines of reality to enable interactive engagement. This fusion is facilitated by a synthesis of cutting-edge technologies, encompassing AI, 6G wireless communication, cloud computing, XR, blockchain, IoT, and more \cite{10070406}. Collectively, the technologies empower individuals to seamlessly interact with computer-generated virtual elements within digital spaces while being physically present in the real-world \cite{10070406}.
In metaverses, users engage with virtual environments through personalized avatars, offering them an immersive experience that parallels their real-world existence \cite{lee2021all}.

The UAV metaverse is a novel concept that combines the UAV industry with the metaverse, creating a digital realm that is both virtual and interconnected with real-world UAVs. This innovative integration opens up a wide range of possibilities for emerging metaverse services tailored specifically for UMUs, such as disaster rescue, panoramic photography, environmental protection testing, and traffic monitoring. In the context of the UAV metaverses, advanced technologies are leveraged to enhance various applications and services. For instance, the UAV metaverses find application in panoramic photography, offering immersive and high-resolution aerial views. By integrating UAVs with the metaverses, UMUs can virtually explore and experience the stunning panoramic images, creating interactive and engaging experiences. Therefore, the UAV metaverses have the potential to shape the future of UAV-based services, enabling innovative and transformative solutions for smart farming, smart cities, and beyond.

\subsection{UAV Twins}
 {Combining UAVs with technologies such as Digital Twin (DT), UAV metaverses emerge as a manifestation of the digital economy era. These metaverses build a new way for traditional industries to explore new development avenues within the field of UAVs.
DTs are a virtual replica of the physical object, meticulously mapped in a virtual environment to accurately represent the entire lifespan of the corresponding entity\cite{zhongblockchain}. The authors in \cite{digital} proposed the use of DTs to bridge physical and virtual systems, where users representing the physical world can experience virtual world activities in real time. }
Likewise, UTs serve as exhaustive digital representations that encompass the entire lifecycle of UAVs, faithfully capturing the various stages and aspects of their existence \cite{Jinbo}. The UTs are tailored to address the specific needs and demands of UMUs. UTs play a pivotal role within the UAV metaverse ecosystem. Specifically, they encompass the physical attributes, functionalities, and operational characteristics of UAVs, effectively serving as comprehensive digital counterparts within the metaverses. Besides, they are created and maintained in digital environments, allowing UMUs to interact with and access UAV-related information and experiences. By utilizing UTs, UMUs can enjoy immersive virtual experiences within the UAV metaverses. For example, by using head-mounted displays or other Augmented Reality (AR) devices, UMUs can virtually navigate and control UAVs in real-world environments \cite{9880566}, enhancing situational awareness and enabling precise control and interaction with virtual UAVs.

\subsection{Resource Pricing Optimization in Metaverses}
Several endeavors have been undertaken to optimize resource pricing within metaverses. The authors in \cite{9880566} introduced a hierarchical game-theoretic framework for investigating the computational resource trading problem in vehicular metaverses. This framework established a coalition game at the upper stratum to discern dependable workers and introduced an incentive mechanism predicated on the Stackelberg game at the lower echelon, to engage the selected workers in rendering tasks.
In \cite{9838736}, the authors proposed a learning-based framework designed for incentivizing Virtual Reality (VR) services within metaverses. This framework employed an auction mechanism to ascertain the optimal pricing and allocation rules within the market context. 
The authors in \cite{huang2022joint} studied the service optimization for metaverses through distributed and centralized approaches. The paper presented a multi-leader multi-follower Stackelberg model that addresses the challenge of user association and resource pricing negotiation between users and Metaverse Service Providers (MSPs).
 {In \cite{huang}, the authors formulated a single-leader multi-follower Stackelberg game framework to investigate the economic aspects of users acquiring offloading services from an MSP and the optimal pricing strategy employed by the MSP.}
 
 {Although considerable attention has been given to optimizing resource pricing in metaverses, little research has focused on the resource optimization problem considering twin migration. To fill this gap, our paper aims to investigate and address the resource optimization challenge during UT migration in the UAV metaverses. }

\subsection{DRL with Pruning Techniques}
In \cite{8967118}, the authors employed game theory to devise a decentralized computation offloading algorithm, casting the problem as a partially observable Markov Decision Process (MDP) and resolving it through a DRL approach. This framework empowers users to discern optimal strategies directly from historical gameplay, obviating the necessity for prior knowledge of the behaviors of others.
DRL synergizes the benefits of deep learning and reinforcement learning, enabling the creation of algorithms that interact dynamically with the environment \cite{zhang2020q}. The algorithm then employs privacy-preserving techniques to learn iteratively, culminating in optimal decision-making. Given the inherent challenge of RSUs not having direct access to the privacy information of UAVs, utilizing the DRL algorithm becomes imperative to solving the Stackelberg equilibrium solution. This approach is crucial in scenarios where both RSUs and UAVs need to safeguard their privacy and refrain from disclosing all their information during interactions.

 {Nevertheless, the training of DRL models necessitates substantial computational resources and storage capacity. In response to the demand for lightweight DRL in certain scenarios, pruning techniques have been introduced to complement and enhance the capabilities of DRL. The authors in \cite{8962235} first addressed the computational and memory challenges of DRL by introducing Policy Pruning and Shrinking (PoPS), a novel iterative algorithm that effectively reduces redundancy in Deep Neural Networks (DNNs) for DRL applications. The authors in \cite{9727767} presented a novel framework for compressing DRL models, leveraging sparse regularized pruning techniques and policy shrinking technology to achieve a balance between high sparsity and compression rates, resulting in advancements in model size reduction and parameter optimization for DRL. 
The authors in \cite{li2022compact} proposed a compact DRL algorithm for dynamic off-chain routing in a Payment Channel Network (PCN)-based IoT context. They constructed an efficient DRL model by utilizing an adaptive pruning technique.}

 {Pruning techniques can be categorized into two main types: unstructured pruning \cite{li2022compact} and structured pruning \cite{rothnon}. Unstructured pruning involves directly pruning the weights of DRL models to achieve sparse weight matrices. On the other hand, structured pruning methods aim to remove entire sets of continuous parameters, such as weights or neurons, in a structured manner. In conclusion, pruning techniques, both structured and unstructured pruning, stand out as promising methods for compressing and accelerating DRL models, and more and more studies on the combination of DRL with pruning techniques are being conducted. Therefore, to expedite the attainment of results approximating the Stackelberg equilibrium, we have chosen to integrate the pruning techniques with MADRL, resulting in a novel algorithm called Tiny MADRL. This approach aims to enhance the efficiency and effectiveness of the solution process. }

\begin{table}[t]
  \begin{center}
    \caption{Key Notations in the Paper.}
    \label{notation}
    \begin{tabular}{l|l} 
    \toprule 
      \textbf{Notation} & \textbf{Definition}\\
      \hline
      $I$ & Number of UAVs  \\
      $J$ & Number of RSUs \\
      $p_i^j$ & The bandwidth selling price of RSU $j$ \\
      & to UAV $i$ \\ 
      $\boldsymbol{p}^j$ & The bandwidth selling price of RSU $j$ \\
      $\boldsymbol{p}^{-j}$ & The bandwidth selling price of RSUs \\
      &except RSU $j$ \\
      $b_i^j$ & The purchase bandwidth of UAV $i$ from RSU $j$ \\
      $\boldsymbol{b}_i$ & The purchase bandwidth of UAV $i$ from RSUs \\
      $V_j(\boldsymbol{p}^j,\boldsymbol{p}^{-j},\boldsymbol{b}^j)$ & The utility of RSU $j$ \\
      $U_i(\boldsymbol{b}_i,\boldsymbol{p}_i)$ & The utility of UAV $i$ \\
      $\sigma^{j2}$ &  Additive white Gaussian noise of RSU $j$   \\
      & transmitting a rendered screen to UAVs \\
      $g^j$ & Channel gain from RSU $j$ transmitting a rendered \\
      & screen to UAVs \\
      $m^j$ & Transmit power from RSU $j$ transmitting a \\
      & rendered screen to UAVs \\
      $c^j$ & The bandwidth cost of RSU $j$ \\
      $\bar{p}^j$ & The upper bandwidth selling price of \\
      & RSU $j$  \\
      $R_i$ & The payment budget of UAV $i$ \\
      $SSIM_i^{th}$ & Threshold of minimum SSIM of UAV $i$ \\
      $l_i^j$ & The brightness similarity between the rendered  \\
      & image in RSU $j$ and the received image by UMUs \\
      $c_i^j$ & The contrast similarity between the rendered  \\
      & image in RSU $j$ and the received image by UMUs \\
      $s_i^j$ & The structure similarity between the rendered  \\
      & image in RSU $j$ and the received image by UMUs \\
      \bottomrule
    \end{tabular}
  \end{center}
\end{table}

\section{System Model and Problem Formulation}\label{System}

\subsection{System Model}

\begin{figure*}[ht]
\centering
\includegraphics[width=0.95\textwidth]{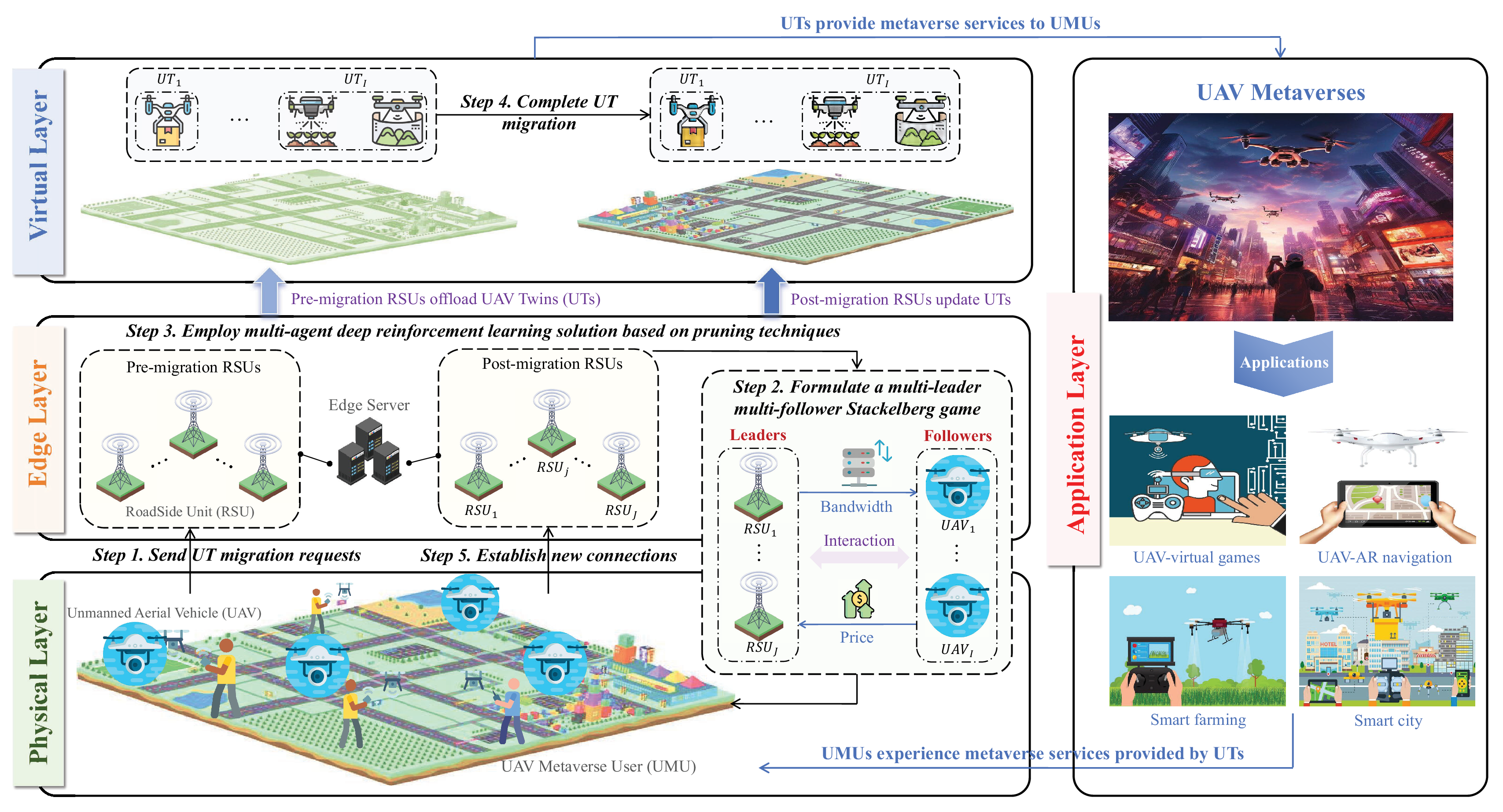}
\caption{A tiny learning-based game approach framework based on the pruning techniques. Note that UAVs seamlessly migrate their UTs from pre-migration RSUs to post-migration RSUs, ensuring that UMUs can consistently access and benefit from the metaverses services offered by the UTs.}
\label{Stac}
\end{figure*}

The Stackelberg game has been widely used for resource management solutions \cite{zhu2021deep, zhongblockchain,zhang2023learning}. 
Due to the dynamic mobility exhibited by UAVs and the inherent limitations in communication coverage offered by RSUs, the decision-making process of UAVs is to select the appropriate RSUs to migrate their UTs. The decision is affected by their trajectories and the pricing schemes, offered by the RSUs they choose to connect. Therefore, the interaction between post-migration RSUs and UAVs can be viewed as resource management. As shown in Fig. \ref{Stac}, the UT migration process in UAV metaverses is described as follows:

\emph{\textbf{Step 1. Send UT migration requests to the pre-migration RSUs:}} Before UTs are migrated, the UAVs need to send migration requests to the RSUs that currently deploy their corresponding UTs, where the RSUs are called the pre-migration RSUs. 

\emph{\textbf{Step 2. Formulate a multi-leader multi-follower Stackelberg game:}} After sending the UT migration requests, the UAVs need to choose to which RSU to migrate their UTs. In the selection process, the RSUs publish their resource pricing strategies, and the UAVs decide how many resource units to purchase for efficient UT migration based on the bandwidth prices of the RSUs. Therefore, a multi-leader multi-follower Stackelberg game between RSUs and UAVs for UT migration can be formulated. 

 {\emph{\textbf{Step 3. Employ multi-agent deep reinforcement learning solution based on the pruning technique:}} Then, the Tiny MADRL algorithm is deployed on edge servers to determine an optimal solution to the Stackelberg game, aiming to facilitate efficient and dependable UT migration. }

 {\emph{\textbf{Step 4. Complete UT migration from the pre-migration RSUs to the post-migration RSUs:}} Through the proposed Stackelberg model, the UAVs can finally choose to migrate their UTs from the pre-migration RSUs to the post-migration RSUs. In this Stackelberg model, the post-migration RSUs act as leaders in the Stackelberg game, taking the responsibility of allocating bandwidth resources for UT migration and autonomously determining pricing strategies for the available bandwidth. Note that the RSUs to which UAV $i$ intends to migrate its UTs are referred to as post-migration RSUs.}

\emph{\textbf{Step 5. Establish new connections with the post-migration RSUs:}} After completing the UT migration, the UAVs establish new connections with the post-migration RSUs, which enables their UTs to obtain resource services provided by the post-migration RSUs, thereby providing uninterrupted metaverse service experiences for UMUs. 

Our main work is to study \emph{\textbf{Step 2}} and \emph{\textbf{Step 3}}, i.e., formulating a multi-leader multi-follower Stackelberg game based on the interactions between RSUs and UAVs, and utilizing the Tiny MADRL algorithm to determine the optimal solution that ensures the attainment of efficient and reliable UT migration.

\subsection{Multi-leader Multi-Follower Stackelberg Game between RSUs and UAVs }\label{Stackelberg}

 {This paper examines a UT migration system comprising $J$ RSUs and $I$ UAVs. The set of RSUs is denoted as $\mathcal{J}\triangleq\{1,2,\ldots,j,\ldots, J\}$, while the set of UAVs is represented as $\mathcal{I}\triangleq\{1,2,\ldots,i,\ldots,I\}$. Each UAV has corresponding UTs to manage their metaverse applications \cite{Jinbo}. We propose a multi-leader multi-follower Stackelberg game framework wherein RSUs assume the role of leaders and UAVs function as followers. 
The key notations used in the problem formulation are shown in Table \ref{notation}. Given that UMUs may access various virtual services, e.g., UAV-AR navigation and UAV virtual games \cite{sefercik2022creation}, a single UAV can have numerous UTs. As a result, UAVs can migrate their UTs to different RSUs and acquire resources from multiple sources. We define $b_i^j$ as the bandwidth demand that UAV $i$ requests from RSU $j$. $p_i^j$ is denoted as the bandwidth price, representing the monetary payments of UAV $i$ using per unit of bandwidth resource from RSU $j$. Both the RSUs and UAVs optimize their utilities by adjusting their strategies. We use backward induction to derive utility functions for RSUs and UAVs. Therefore, the game is decomposed into two sub-games as follows.}

\subsubsection{UAVs' Bandwidth Demands in Stage II}
In Stage II, UAV $i$ determines the bandwidth demand, denoted as $b_i^j$, to purchase from RSU $j$. This purchase decision is made to facilitate the migration of the corresponding UT. The bandwidth demand is guided by the bandwidth price strategies issued by the RSUs in Stage I. We denote the transmit power from RSU $j$ transmitting a rendered screen to UAVs as $m^j$. Let $\tau^j$ be the Signal-to-Noise Ratio (SNR) from RSU $j$ transmitting a rendered screen to UAVs, which is expressed as \cite{ren2022quantum}
\begin{equation}
    \tau^j=\frac{m^jg^j}{(\sigma^{j})^2},
\end{equation}
where $g^j$ and $\sigma^{j}$ represent the channel gain and additive white Gaussian noise, respectively. We define the spectrum efficiency of RSU $j$ transmitting a rendered screen to UAVs as $q^j$, which is expressed as
\begin{equation}
    q^j=\log_2(1+\tau^j).
\end{equation}
Therefore, the transmission rate of RSU $j$ transmitting a rendered screen to UAV $i$ is given by \cite{ren2022quantum}
\begin{equation}
    r_i^j=b_i^jq^j.
\end{equation}

The authors in \cite{1284395} introduced the Structural Similarity Measure (SSIM) to assess local patterns of pixel intensities, accounting for normalization in terms of brightness and contrast. SSIM is a well-known and widely-adopted quality metric used to quantify the degree of similarity between the structural information of two images, which is a combination of three components and expressed as \cite{1284395}
\begin{equation}
    SSIM(x_1,x_2)=g\big(l(x_1,x_2),c(x_1,x_2),s(x_1,x_2)\big),
\end{equation}
where $l(x_1,x_2)$, $c(x_1,x_2)$, and $s(x_1,x_2)$ represent luminance comparison, contrast comparison, and structure comparison between image $x_1$ and $x_2$, respectively. $g(\cdot)$ represents the combination function of SSIM. Note that the three components of SSIM are strictly independent, and SSIM meets the following three conditions \cite{1284395}: 
\begin{itemize}
    \item Symmetry: The SSIM between two images, denoted as $x_1$ and $x_2$, is symmetric, meaning that $SSIM(x_1,x_2)=SSIM(x_2,x_1)$.
    \item Boundedness: The SSIM value between two images is bounded by $1$, indicating that the SSIM score is always less than or equal to $1$, i.e., $SSIM(x_1,x_2)\leq{1}$.
    \item Unique maximum: The SSIM score reaches its maximum value of $1$ if and only if the images being compared, $x_1$ and $x_2$, are identical, i.e., $SSIM(x_1,x_2)=1$ implies $x_1=x_2$.
\end{itemize}

 {In the quest for an immersive user experience within the metaverses, real-time rendering emerges as a pivotal technology \cite{9880566}, with graphics rendering standing as a major function. With the dynamic mobility of UAVs, UAVs migrate their UTs between RSUs to make UMUs access metaverses services continuously and seamlessly. The fidelity of image rendering assumes a pivotal role in shaping the immersive experiences of UMUs.
Consequently, we adopt the SSIM to gauge the realism of picture rendering, ensuring a seamless virtual environment for UMUs. We denote $SSIM_i^j$ as the SSIM score of the image received by UAV $i$ deciding to migrate its UTs to RSU $j$, which is given by 
\begin{equation}
    SSIM_i^j = (l_i^j)^\alpha (c_i^j)^\beta (s_i^j)^\nu,
\end{equation}
where $l_i^j$, $c_i^j$, and $s_i^j$ represent the brightness similarity, contrast similarity, and structural similarity, respectively. The values quantify the image comparison between the rendered image in RSU $j$ and the received image by UMU after the migration of UT of UAV $i$ to RSU $j$, followed by the transmission to UMU. $\alpha>0$, $\beta>0$, and $\nu>0$ are the parameters that adjust the brightness, contrast, and structural weights \cite{1284395}, respectively. Without loss of generality, to simplify the presentation of derivation, we set $\alpha=\beta=\nu=1$. }

We denote $Sat_i^j$ as the satisfaction of UAV $i$ selecting RSU $j$ for UT migration, and employ logarithmic function $\ln(\cdot)$ to model the effect of the image transmission rate on the satisfaction of UAV $i$, which is expressed as \cite{ou2023stackelberg}
\begin{equation}
    Sat_i^j=\mu_i\ln(1+r_i^j),
\end{equation}
where $\mu_i$ denotes the satisfaction factor of UAV $i$, representing the sensitivity of UAV $i$ to the image transmission rate. Drawing inspiration from Gustav Fechner's extension of the relationship between human perception and relative stimulus changes, known as Weber-Fechner's Law (WFL) \cite{reichl2013logarithmic}, we leverage the $\ln(\cdot)$ function to model human perception of service quality. Then, we propose the utilization of metaverse immersion metrics to model the Quality of Experience (QoE) of UMUs based on WFL \cite{du}. Therefore, the level of metaverses immersion that UAV $i$ can provide to UMUs after migrating its UTs to RSU $j$ can be expressed as 
\begin{equation}
    I_i^j=Sat_i^j\ln\bigg(\frac{SSIM_i^j}{SSIM_{i}^{th}}\bigg), 
\end{equation}
where $SSIM_{i}^{th}$ represents the minimum SSIM threshold of UAV $i$ \cite{ yu2023attention}.

For UAVs, the metaverse immersion metric serves as a quantitative measure of QoE for UMUs, and a higher metric value indicates an enhanced QoE. Therefore, if UAV $i$ requests bandwidth resources from RSU $j$, the utility function of UAV $i$ for RSU $j$ is given by
\begin{equation}
\begin{split}
    U_i^j&=\xi_iI_i^j-p_i^j{b_i^j}, \\
    &=\delta_i\ln(1+b_i^jq^j)\ln\bigg(\frac{SSIM_i^j}{SSIM_{i}^{th}}\bigg)-p_i^j{b_i^j},
\end{split}
\end{equation}
where $\xi_i$ represents the metaverse immersion factor of UAV $i$, and $\delta_i=\xi_i\mu_i$ for ensuring $U_i^j>0$. Since we consider multiple RSUs that provide similar resource services, UAV $i$ can choose to request bandwidth resources from any RSU, and the utility function of UAV $i$ is expressed as
\begin{equation}
    U_i(\boldsymbol{b}_i,\boldsymbol{p}_i)=\sum_{j\in{\mathcal{J}}}U_i^j,
\end{equation}
where $\boldsymbol{b}_i=\{b_i^1,\ldots,b_i^j,\ldots,b_i^J\}$ represents the bandwidth demand of UAV $i$ from each RSU $j$, and $\boldsymbol{p}_i=\{p_i^1,\ldots,p_i^j,\ldots,p_i^J\}$ represents the bandwidth price of each RSU $j$. 
Each UAV $i$ strategically determines its optimal bandwidth demand strategy vector, represented as $\boldsymbol{b_i}$. This strategic decision-making process is driven by the objective of maximizing their respective utilities $U_i$. The optimization is performed by the bandwidth price policy vector $\boldsymbol{p}_i$. The optimization problem for bandwidth purchases of UAVs is formulated as follows:
\begin{equation}
    \begin{split}
    \textbf{Problem:}\:&\max\limits_{\boldsymbol{b}_i}\:U_i(\boldsymbol{b}_i,\boldsymbol{p}_i)  \\
    &\:\:\text{s.t.}\:\: 0 \leq b_i^j,\:\forall j, \\
    &\quad\:\:\:\:\textstyle\sum_{j\in\mathcal{J}}p_i^j{b_i^j}\leq{R_i},
    \end{split}
    \label{U_i}
\end{equation}
where $R_i$ is the payment budget of UAV $i$. UAVs decide their bandwidth purchase strategies depending on the pricing strategies employed by RSUs. The cumulative bandwidth purchases made by UAVs, on the other hand, influence the selling price strategies of RSUs. This interaction lays the groundwork for our upcoming derivation of the utility function of RSUs.

\subsubsection{RSUs' bandwidth selling prices in Stage I}
In Stage I, each RSU $j$ formulates its pricing strategy for the bandwidth to UAV $i$, denoted as $p_i^j$ \cite{xiong2019cloud}. This pricing strategy is determined based on an analysis of the anticipated bandwidth purchase demand from UAVs. 

 {To facilitate UT migration, RSUs are required to purchase bandwidth from the system market. Subsequently, they allocate sufficient bandwidth resources to UAVs for UT migration, which results in associated costs. We denote the bandwidth cost as $c^j$, which is determined by the market regulation for RSU $j$ \cite{lyu}. Therefore, the utility of RSU $j$ encompasses two distinct components, namely the revenue and the cost, which can be expressed as}
\begin{equation}
\label{V}
    V_j(\boldsymbol{p}^j,\boldsymbol{p}^{-j},\boldsymbol{b}^j)=\sum_{i\in{\mathcal{I}}}\big({p_i^j{b_i^j}}-c^jb_i^j\big),
\end{equation}
where $\boldsymbol{p}^j=\{p_1^j,\ldots,p_i^j,\ldots, p_I^j\}$ and $\boldsymbol{p}^{-j}$ denote the vector of bandwidth price strategies posted by RSU $j$ and all RSUs except RSU $j$, respectively. $\boldsymbol{b}^j=\{b_1^j,\ldots,b_i^j,\ldots, b_I^j\}$ represents the bandwidth demand that each UAV $i$ purchases from RSU $j$. Note that each RSU $j$ decides to maximize its utility function $V_j$ according to the policies of all UAVs (i.e., $\boldsymbol{b^j}$), thus obtaining the optimal strategy $p_i^j$. In addition, we also need to consider the constraint for the utility function of RSUs, which is the price constraint, expressed as ${c^j}\leq{p_i^j}\leq{\bar{p}_{i}^j}$, where $\bar{p}^j$ represents the upper bandwidth price, which is determined by the market regulations for RSU $j$ \cite{ou2023stackelberg}. Therefore, the bandwidth price optimization problem of the RSUs as the leaders is expressed as follows:
\begin{equation}
    \begin{split}
    \textbf{Problem:}\:&\max\limits_{\boldsymbol{p}^j}\:V_j(\boldsymbol{p}^j,\boldsymbol{p}^{-j},\boldsymbol{b}^j)  \\
    &\:\:\text{s.t.}\:\: c^j \leq p_i^j \leq \bar{p}^j.
    \end{split}
    \label{V_j}
\end{equation}

In consideration of the utility functions of RSUs and UAVs, the application of the backward induction method is employed to analyze the multi-leader multi-follower Stackelberg equilibrium in the subsequent analysis.

\subsubsection{Stackelberg Equilibrium Analysis}
In the Stackelberg equilibrium, the leaders (i.e., the RSUs) cannot increase profits by changing bandwidth prices, and the followers (i.e., the UAVs) are incapable of augmenting their utilities by adjustments in bandwidth demands. 
In the context of the framework involving RSUs and UAVs, engaged in a multi-leader multi-follower Stackelberg game, the definition of the Stackelberg equilibrium unfolds as follows.

\begin{definition}
     {\textbf{(Stackelberg equilibrium):} Let $(b_i^j)*$ and $(p_i^j)^*$ denote the optimal bandwidth demand of UAV $i$ to RSU $j$, and the optimal bandwidth price of RSU $j$ to UAV $i$, respectively. Therefore, $\big((b_i^j)*, (p_i^j)^*,\:\forall i\in{\mathcal{I}},\:\forall j\in{\mathcal{J}}\big)$ is the Stackelberg equilibrium within the context of the Stackelberg game involving RSUs and UAVs is contingent upon strict satisfying the following two conditions \cite{8758205}:
\begin{enumerate}[1)]
    \item There exists a subgame among the RSUs, which strictly compete for bandwidth prices in a non-cooperative fashion. The Nash equilibrium of the subgame is achieved, i.e., $({p}_i^j)^*$ is a Nash equilibrium for RSUs, if the following condition is satisfying,
    \begin{equation}
    V_j\big(\boldsymbol{p}^{j*},\boldsymbol{p}^{-j*},\boldsymbol{b}^{j*}\big)\ge{V_j\big(\boldsymbol{p}^j,\boldsymbol{p}^{-j*},\boldsymbol{b}^{j*}\big)},\:\forall{j},
\end{equation}
\item Under the pricing strategy denoted by $\boldsymbol p_i^*$, the optimal reaction elicited from the utility function of UAV $i$ is represented by $\boldsymbol b_i^*$. This optimal response,  $\boldsymbol b_i^*$, serves as the exclusive solution that maximizes the utility function $U_i(\boldsymbol{b}_i,\boldsymbol{p}_i^*)$, conditioned upon the pricing strategy $\boldsymbol p_i^*$.
\end{enumerate}
}
\end{definition}
For the ease of representation, we represent $\ln\Big(\frac{SSIM_i^j}{SSIM_i^{th}}\Big)$ as $S_i^j$ in the following.  

\paragraph{UAVs' optimal strategies in Stage II}\label{UMU}
The UAVs determine the profit-maximizing optimal bandwidth purchase strategies in Stage II by considering the published bandwidth price policy vector from the RSUs.

\begin{Proposition}
For any given RSU $j$ with the bandwidth selling price $p_i^j$, the optimization problem of UAV $i$ is inherently convex. The optimal strategy dictating the bandwidth demand to be procured by UAV $i$ can be formulated as
\begin{equation}\label{b*}
    b_i^{j*}=\begin{cases}
    \check{b}_i^j,\quad \mathrm{if} \:\big(\delta_iq^jS_i^j>p_i^j\big)\bigcap\big(\sum_{j\in{\mathcal{J}}}p_i^j\check{b}_i^j\leq{R_i}\big), \\
    \hat{b}_i^j,\quad \mathrm{if} \bigg(p_i^j<\frac{S_i^jq^j\big(R_i+\sum_{\forall k\in\mathcal{J}\backslash j}\frac{p^k}{q^k}\big)}{\sum_{\forall k\in\mathcal{J}\backslash j}S_i^k}\bigg) \\
    \quad\quad\quad\quad\quad\quad\:\bigcap\big(\sum_{j\in{\mathcal{J}}}p_i^j\check{b}_i^j>R_i\big), \\
    0, \quad\: \mathrm{otherwise},
    \end{cases}
\end{equation}
where $\check{b}_i^j$ and $\hat{b}_i^j$ represent the optimal strategy of UAV $i$ to RSU $j$ if the second constraint in Eq. (\ref{V_j}) is inactive and active, respectively, which are expressed as
\begin{equation}
\begin{split}
    \check{b}_i^j&=\frac{\delta_iS_i^j}{p_i^j}-\frac{1}{q^j},\\
    \hat{b}_i^j&=\frac{S_i^j\big(R_i+\sum_{j\in{\mathcal{J}}}\frac{p_i^j}{q^j}\big)}{p_i^j\sum_{j\in{\mathcal{J}}}S_i^j}-\frac{1}{q^j}.
\end{split}
\end{equation}
\end{Proposition}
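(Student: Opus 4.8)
The plan is to exploit the additive separability of $U_i$ to establish concavity, then invoke KKT optimality and partition the solution according to which constraints bind.

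\emph{Concavity.} Because $U_i(\boldsymbol{b}_i,\boldsymbol{p}_i)=\sum_{j\in\mathcal{J}}U_i^j$ decouples across coordinates, I would examine each term $U_i^j=\delta_iS_i^j\ln(1+b_i^jq^j)-p_i^jb_i^j$ separately. Its second derivative is $-\delta_iS_i^j(q^j)^2/(1+b_i^jq^j)^2<0$, using $\delta_i>0$, $S_i^j>0$, and $q^j>0$. The Hessian of $U_i$ is therefore diagonal with strictly negative entries, so $U_i$ is strictly concave; since the constraints in Eq. (\ref{U_i}) are affine, the feasible region is convex and the maximization problem is convex, which settles the first claim.

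\emph{KKT and case split.} With convexity and Slater's condition in hand, the KKT conditions are both necessary and sufficient, so I would form the Lagrangian using a multiplier $\lambda\ge 0$ for the budget constraint $\sum_j p_i^j b_i^j\le R_i$ and multipliers $\mu_j\ge 0$ for $b_i^j\ge 0$. Stationarity reads $\delta_iS_i^jq^j/(1+b_i^jq^j)=(1+\lambda)p_i^j-\mu_j$. When the budget is slack ($\lambda=0$) and $b_i^j>0$ (so $\mu_j=0$), this inverts to $b_i^j=\delta_iS_i^j/p_i^j-1/q^j=\check{b}_i^j$; positivity forces $\delta_iq^jS_i^j>p_i^j$ and slackness forces $\sum_j p_i^j\check{b}_i^j\le R_i$, which is the first branch. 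When the budget binds ($\lambda>0$), the same inversion gives $b_i^j=\delta_iS_i^j/((1+\lambda)p_i^j)-1/q^j$, and substituting these into the equality $\sum_j p_i^j b_i^j=R_i$ lets me solve for the shared multiplier, $1+\lambda=\delta_i\sum_k S_i^k/(R_i+\sum_k p_i^k/q^k)$; back-substitution then reproduces $\hat{b}_i^j$. Coordinates for which neither regime yields a positive value are clamped to $b_i^{j*}=0$ by complementary slackness, giving the final branch.

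\emph{Main obstacle.} The delicate step is the binding-budget case, since the budget couples every coordinate through the single multiplier $\lambda$ and the $b_i^j$ cannot be solved independently; eliminating $\lambda$ via the budget equality is what produces the closed form $\hat{b}_i^j$. A second subtlety is the positivity threshold: requiring $\hat{b}_i^j>0$ initially involves the full sums $\sum_{k\in\mathcal{J}}$, but isolating the $k=j$ term on each side cancels the $S_i^jp_i^j$ contributions and collapses the condition to the stated form over $\mathcal{J}\backslash j$. Verifying this cancellation, and checking that the three branches partition the parameter space consistently with the assumed constraint activity, is the crux of the argument.
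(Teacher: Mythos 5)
Your proof is correct and follows essentially the same route as the paper's: establish concavity of $U_i$ in $b_i^j$, obtain $\check{b}_i^j$ from the first-order condition when the budget is slack, and obtain $\hat{b}_i^j$ in the binding case by introducing the budget multiplier, eliminating it via the equality $\sum_{j}p_i^jb_i^j=R_i$, and back-substituting, with the zero branch handled by the positivity thresholds. If anything, your treatment is slightly more careful on two points the paper glosses over: you argue concavity per coordinate via the diagonal Hessian (the paper's derivatives carry a spurious sum over $j$), and you explicitly verify the cancellation of the $S_i^jp_i^j$ terms that collapses the condition $\hat{b}_i^j>0$ to the stated form with sums over $\mathcal{J}\backslash j$, which the paper asserts without showing.
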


\begin{proof}
The derivatives of the utility function $U_i$ with respect to the bandwidth demand strategy $b_i^j$ up to the first and second orders are provided as
\begin{equation}
\begin{split}
    \frac{\partial{U_i}}{\partial{b}_i^j}&=\sum_{j\in{\mathcal{J}}}\bigg(\frac{\delta_iq^jS_i^j}{1+b_i^jq^j}-p_i^j\bigg),\\
    \frac{\partial^2U_i}{\partial{b}_i^{j2}}&=-\sum_{j\in{\mathcal{J}}}\bigg(\frac{\delta_iq^{j2}S_i^j}{(1+b_i^jq^j)^2}\bigg)<{0}.
\end{split}
\end{equation}  
The utility function $U_i$ demonstrates a singular point at which its first-order derivative equals zero. Furthermore, the negative second-order derivative indicates the strict concavity of the utility function $U_i$ to the bandwidth demand strategy $b_i^j$ employed by UAV $i$. Thus, the optimization problem for UAV $i$ is a convex optimization problem \cite{9830075}.

To determine the optimal response policy $\check{b}_i^j$ of UAV $i$ with respect to RSU $j$, we equate the first-order derivative of $U_i$ to zero. We can obtain the best response policy $\check{b}_i^j$ shown as
\begin{equation}\label{b_bar}
    \check{b}_i^j=\frac{\delta_iS_i^j}{p_i^j}-\frac{1}{q^j}.
\end{equation}
However, if $\delta_iq^jS_i^j\leq{p_i^j}$, UAV $i$ will not purchase bandwidth resources from RSU $j$, and then $b_i^j=0$. Therefore, if the second constraint in Eq. (\ref{U_i}) is inactive, the best response policy of UAV $i$ for RSU $j$ is
\begin{equation}\label{inactive}
    b_i^{j*}=
    \begin{cases}
        \check{b}_i^j,&\mathrm{if} \: p_i^j<\delta_iq^jS_i^j, \\
        0,&\mathrm{if} \:p_i^j\ge\delta_iq^jS_i^j.
    \end{cases}
\end{equation}

If the second constraint of Eq. (\ref{U_i}) is active, we have $\sum_{i\in{\mathcal{I}}}p_i^jb_i^j\ge R_i$. Then, the Lagrangian function can be utilized to find the optimal strategies for UAV $i$, given by
\begin{equation}
    L_i=U_i-\lambda_i\bigg(\sum_{j\in{\mathcal{J}}}p_i^jb_i^j-R_i\bigg),
\end{equation}
where $\lambda_i$ represents the Lagrangian multiplier \cite{9830075}. The Karush-Kuhn-Tucker (KKT) conditions for problem Eq. (\ref{U_i}) is given by
\begin{equation}
    \frac{\partial{L}_i}{\partial{b}_i^j}=\frac{\delta_iq^jS_i^j}{1+b_i^jq^j}-p_i^j-\lambda_ip_i^j=0, \label{KKT}
\end{equation}
\begin{equation}
    \lambda_i\bigg(\sum_{j\in{\mathcal{J}}}p_i^jb_i^j-R_i\bigg)=0,
\end{equation}
\begin{equation}
    \lambda_i\ge{0},\quad \sum_{j\in{\mathcal{J}}}p_i^jb_i^j=R_i. \label{lambda}
\end{equation}
By solving Eq. (\ref{KKT}), we can obtain the optimal strategies of UAV $i$ for RSU $j$, which is expressed as
\begin{equation}
    b_i^j=\frac{\delta_iS_i^j}{p_i^j(1+\lambda_i)}-\frac{1}{q^j}.\label{bij}
\end{equation}
The substitution of Eq. (\ref{bij}) into Eq. (\ref{lambda}) yields the following expression
\begin{equation}
    \sum_{j\in{\mathcal{J}}}\bigg(p_i^j\bigg(\frac{\delta_iS_i^j}{p_i^j(1+\lambda_i)}-\frac{1}{q^j}\bigg)\bigg)=R_i,\label{*}
\end{equation}
\begin{equation}
    \lambda_i=\frac{\delta_i\sum_{j\in{\mathcal{J}}}S_i^j}{R_i+\sum_{j\in{\mathcal{J}}}\frac{p_i^j}{q^j}}-1.\label{lambda_i}
\end{equation}
By substituting Equation (\ref{lambda_i}) into Equation (\ref{bij}), the optimal strategy for UAV $i$ can be obtained by
\begin{equation}
\begin{split}
    \hat{b}_i^j&=\frac{S_i^j\big(R_i+\sum_{j\in{\mathcal{J}}}\frac{p_i^j}{q^j}\big)}{p_i^j\sum_{j\in{\mathcal{J}}}S_i^j}-\frac{1}{q^j}, \\
    &=\frac{S_i^j\big(R_i+\sum_{\forall k\in\mathcal{J}\backslash j}\frac{p^k}{q^k}+\frac{p_i^j}{q^j}\big)}{p_i^j\sum_{j\in{\mathcal{J}}}S_i^j}-\frac{1}{q^j}.
\end{split}
\end{equation}
We denote $\sum_{j\in{\mathcal{J}}}{S_i^j}$ and $\sum_{\forall k\in\mathcal{J}\backslash j}\frac{p^k}{q^k}$ as $A$ and $B$, respectively. Therefore, we can achieve the optimal strategy of UAV $i$ to RSU $j$ as
\begin{equation}\label{b_hat}
\begin{split}
    \hat{b}_i^j&=\frac{S_i^j\big(R_i+B+\frac{p_i^j}{q^j}\big)}{Ap_i^j}-\frac{1}{q^j},\\ 
    &=\frac{S_i^j(R_i+B)}{Ap_i^j}+\frac{S_i^j}{Aq^j}-\frac{1}{q^j}.
\end{split}
\end{equation}
If $\frac{S_i^jq^j\big(R_i+\sum_{\forall k\in\mathcal{J}\backslash j}\frac{p^k}{q^k}\big)}{\sum_{j\in{\mathcal{J}}}S_i^j}\leq{p_i^j}$, UAV $i$ will not purchase bandwidth from RSU $j$. Therefore, the optimal strategy of UAV $i$ to RSU $j$ is expressed as
\begin{equation}\label{active}
    b_i^{j*}=
    \begin{cases}
        \hat{b}_i^j,\quad \mathrm{if} \: p_i^j<\frac{S_i^jq^j\big(R_i+\sum_{\forall k\in\mathcal{J}\backslash j}\frac{p_i^k}{q^k}\big)}{\sum_{\forall k\in\mathcal{J}\backslash j}S_i^k}, \\
        0,\quad\:\: \mathrm{if} \:p_i^j\ge\frac{S_i^jq^j\big(R_i+\sum_{\forall k\in\mathcal{J}\backslash j}\frac{p_i^k}{q^k}\big)}{\sum_{\forall k\in\mathcal{J}\backslash j}S_i^k}.
    \end{cases}
\end{equation}

\end{proof}
The above results indicate that the higher the bandwidth price set by RSU $j$, the less amount of bandwidth is purchased by UAV $i$.
Given the aforementioned analysis, the UAVs can adapt their optimal bandwidth purchase strategies in response to the bandwidth selling price offered by RSUs, as shown in \textbf{Algorithm \ref{1}} \cite{9830075}.

\begin{algorithm}[t]
\label{1}
\DontPrintSemicolon
  \SetAlgoLined
  \KwIn {The bandwidth selling price $\boldsymbol{p}^j$ released by all RSUs.}
  \KwOut {The optimal bandwidth demand strategies $b_i^{j*}$ of each UAV $i$ to each RSU $j$.}
  \eIf {$\sum_{j\in{\mathcal{J}}}p_i^j\big(\frac{\delta_iS_i^j}{p_i^j}-\frac{1}{q^j}\big)\leq{R_i}$}{
  \tcp{\rm{ If $\sum_{j\in{J}}p_i^j{b_i^j}\leq{R_i}$ is inactive.}} \
    Calculate $b_i^{j*}$ based on Eq. (\ref{inactive}).\;
  }{
  \tcp{\rm{If $\sum_{j\in{J}}p_i^j{b_i^j}\leq{R_i}$ is active.}} \
    Calculate $b_i^{j*}$ based on Eq. (\ref{active}).\;
  }
  \caption{ {Find the optimal bandwidth demands for UAVs.}}
\end{algorithm}

\paragraph{RSUs' optimal strategies as equilibrium in Stage I}
By predicting the optimal bandwidth purchase policies of UAVs, RSUs take the role of leaders in Stage I, wherein they aim to optimize their utilities. Non-cooperative games place a strong emphasis on individual rationality and the pursuit of optimal decisions by each player. In the context of RSU decision-making, each RSU participating in UT migration acts in a self-interested and rational manner, making strategic choices with the primary goal of maximizing its utility. Consequently, we formulate the competitive dynamics among RSUs as a non-cooperative game, and the sought-after solution to this game is the Nash equilibrium \cite{8758205}.

\begin{lemma}\label{Nash equilibrium}
The existence of Nash equilibrium in a non-cooperative game can be guaranteed when the following three conditions are met \cite{zhan2020learning}:
\begin{itemize}
    \item The player set is characterized by finiteness.
    \item Both strategy sets are delineated by closure and boundedness, demonstrating convexity.
    \item The utility functions exhibit continuity and quasi-concavity within the confines of the strategy space.
\end{itemize}
\end{lemma}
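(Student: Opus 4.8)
The plan is to recognize this statement as the classical Debreu--Glicksberg--Fan existence theorem (a generalization of Nash's theorem) and to establish it through Kakutani's fixed-point theorem applied to the joint best-response correspondence. First I would fix generic notation matching the three hypotheses: let the finite player set be $\mathcal{N}=\{1,\dots,N\}$, let each player $n$ have strategy set $\mathcal{S}_n\subset\mathbb{R}^{d_n}$ and payoff $u_n(s_n,s_{-n})$, and write $\mathcal{S}=\prod_{n}\mathcal{S}_n$. The hypotheses then read: $N<\infty$; each $\mathcal{S}_n$ is nonempty, closed, bounded and convex, hence compact; and each $u_n$ is continuous on $\mathcal{S}$ and quasi-concave in its own argument $s_n$. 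In the present setting this instantiates with the RSUs as the players, the per-UAV price boxes $[c^j,\bar{p}^j]$ as the convex compact strategy sets, and $V_j$ as the payoff.

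Next I would introduce the best-response correspondence $\mathrm{BR}_n(s_{-n})=\arg\max_{s_n\in\mathcal{S}_n}u_n(s_n,s_{-n})$ together with the product map $\mathrm{BR}(s)=\prod_{n}\mathrm{BR}_n(s_{-n})$, so that any fixed point $s^{*}\in\mathrm{BR}(s^{*})$ is by definition a Nash equilibrium. The core of the argument is to verify the three ingredients that Kakutani's theorem requires of $\mathrm{BR}$. Nonemptiness of each $\mathrm{BR}_n(s_{-n})$ follows from the Weierstrass extreme value theorem, since $u_n(\cdot,s_{-n})$ is continuous on the compact set $\mathcal{S}_n$. Convexity of each $\mathrm{BR}_n(s_{-n})$ follows from quasi-concavity of $u_n$ in $s_n$: the maximizer set is the intersection of the convex domain $\mathcal{S}_n$ with the upper-level set at the optimal value, and quasi-concavity makes every upper-level set convex. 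Upper hemicontinuity with closed graph follows from Berge's maximum theorem, using the joint continuity of $u_n$ and the fact that the constraint set $\mathcal{S}_n$ is fixed and compact.

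With these facts in hand I would invoke Kakutani's fixed-point theorem: $\mathcal{S}$ is nonempty, compact and convex, and $\mathrm{BR}:\mathcal{S}\rightrightarrows\mathcal{S}$ is an upper hemicontinuous correspondence with nonempty, convex, closed values, so it admits a fixed point $s^{*}$ with $s^{*}\in\mathrm{BR}(s^{*})$; unwinding the definition gives $u_n(s_n^{*},s_{-n}^{*})\ge u_n(s_n,s_{-n}^{*})$ for all $n$ and all $s_n\in\mathcal{S}_n$, i.e. a Nash equilibrium. I expect the main obstacle to be the upper-hemicontinuity step: applying Berge's maximum theorem cleanly requires joint (not merely separate) continuity of each $u_n$ and compactness (not merely convexity) of the strategy sets, and one must additionally check that the product of upper hemicontinuous, compact-valued correspondences is again upper hemicontinuous so that Kakutani may be applied to $\mathrm{BR}$ rather than only to the individual $\mathrm{BR}_n$. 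The nonemptiness and convexity steps, by contrast, are routine once the Weierstrass and quasi-concavity arguments are in place.
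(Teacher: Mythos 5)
Your proposal is correct, but it is worth noting that the paper itself does not prove this lemma at all: it is stated as a known result and supported only by a citation, and it is then invoked later (in the proof of Theorem~\ref{non}) after the concavity of $V_j$ and the compactness and convexity of the price sets $[c^j,\bar{p}^j]$ have been verified. What you have written is the standard Debreu--Glicksberg--Fan argument via Kakutani's fixed-point theorem, and each step is sound: nonemptiness of the best response by Weierstrass, convexity of the argmax set from quasi-concavity (the argmax is the intersection of the convex strategy set with an upper level set at the optimal value), upper hemicontinuity from Berge's maximum theorem (which applies cleanly here precisely because the constraint correspondence is the constant, compact set $\mathcal{S}_n$, so joint continuity of $u_n$ is all that is needed), and finally Kakutani on the product correspondence $\mathrm{BR}$, using that a finite product of u.h.c., compact-valued correspondences is again u.h.c. and that closed-plus-bounded sets in $\mathbb{R}^{d_n}$ are compact by Heine--Borel. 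Your instantiation to the present game (RSUs as players, price boxes $[c^j,\bar{p}^j]$ as strategy sets, $V_j$ as payoffs) matches exactly how the paper applies the lemma. The trade-off between the two treatments is the usual one: the paper's citation keeps the exposition short and is adequate for its engineering purpose, whereas your proof makes the lemma self-contained and makes explicit where each of the three hypotheses is actually used---which is valuable since the paper's later existence claim (Theorem~\ref{non}) rests entirely on checking those hypotheses.
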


\begin{theorem}\label{non}
    There exists a Nash equilibrium in the non-cooperative game among RSUs \cite{8758205}.
\end{theorem}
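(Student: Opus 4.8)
The plan is to verify the three sufficient conditions stated in Lemma~\ref{Nash equilibrium} for the non-cooperative subgame among the $J$ RSUs, in which each RSU $j$ chooses its price vector $\boldsymbol{p}^j=\{p_1^j,\ldots,p_I^j\}$ to maximize $V_j$ of Eq.~(\ref{V}) subject to the box constraint $c^j\le p_i^j\le\bar{p}^j$ in Eq.~(\ref{V_j}). The first two conditions are immediate: the player set $\mathcal{J}$ is finite with cardinality $J$, and each RSU's strategy set is the Cartesian product $\prod_{i\in\mathcal{I}}[c^j,\bar{p}^j]$, i.e., a closed, bounded (hence compact), and convex hyperrectangle.

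The essential work lies in the third condition, namely that each $V_j$ is continuous and quasi-concave in its own strategy $\boldsymbol{p}^j$. First I would substitute the follower best response $b_i^{j*}$ derived in the preceding Proposition (Eq.~(\ref{b*})) into $V_j=\sum_{i\in\mathcal{I}}(p_i^j-c^j)\,b_i^{j*}$, noting that from RSU $j$'s viewpoint the prices posted by all other RSUs, and therefore the aggregate quantities $A$ and $B$, are fixed constants. A key observation is that the resulting objective is \emph{separable}: the revenue contributed by UAV $i$ depends only on $p_i^j$ and on quantities outside RSU $j$'s control, so joint concavity in $\boldsymbol{p}^j$ will follow from concavity in each $p_i^j$ individually. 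On the active-demand branch I would use the fact that both $\check{b}_i^j$ and $\hat{b}_i^j$ have the form $\kappa_i/p_i^j+\text{const}$ with a positive constant $\kappa_i$, and then compute $\partial^2 V_j/\partial (p_i^j)^2=-2c^j\kappa_i/(p_i^j)^3<0$, establishing strict concavity of every active revenue term and hence joint concavity of $V_j$ on that region.

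The main obstacle will be to handle the piecewise structure of the best response cleanly, since $b_i^{j*}$ drops to $0$ once $p_i^j$ exceeds the relevant threshold in Eq.~(\ref{inactive})/(\ref{active}). I would show that at each threshold the demand vanishes continuously---for instance $\check{b}_i^j=\tfrac{\delta_i S_i^j}{p_i^j}-\tfrac{1}{q^j}\to 0$ as $p_i^j\to\delta_i q^j S_i^j$---so that the revenue $(p_i^j-c^j)b_i^{j*}$, and therefore $V_j$, is continuous across the whole box. For quasi-concavity, each per-UAV revenue curve starts at $0$ when $p_i^j=c^j$, is strictly concave on the active interval, and returns to the constant value $0$ beyond the threshold, so its upper-level sets are intervals; on the region where all demands remain strictly positive the separable objective is genuinely concave (sum of concave terms), which is where any candidate equilibrium lies, since no RSU gains by pricing its demand down to zero. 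Having verified continuity and quasi-concavity together with the finiteness and compact-convexity of the strategy sets, the three hypotheses of Lemma~\ref{Nash equilibrium} are satisfied, and the existence of a Nash equilibrium among the RSUs follows.
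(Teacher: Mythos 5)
Your proposal follows essentially the same route as the paper: it verifies the hypotheses of Lemma~\ref{Nash equilibrium} by substituting the followers' best responses into $V_j$ and computing $\partial^2 V_j/\partial (p_i^j)^2<0$ on both the budget-active and budget-inactive branches, which is exactly the paper's computation. If anything, your treatment is more careful, since you also address continuity and quasi-concavity across the thresholds where demand drops to zero --- a piecewise issue the paper's proof silently ignores.
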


\begin{proof}
When the second constraint of Eq. (\ref{U_i}) is active, the optimal strategies of UAV $i$ (as given in Eq. (\ref{b_hat})) are substituted into the utility function of RSU $j$ as
\begin{equation}
    V_j=\sum_{i\in{\mathcal{I}}}\bigg(\big(p_i^j-c^j\big)\bigg(\frac{S_i^j(R_i+B)}{Ap_i^j}+\frac{S_i^j}{Aq^j}-\frac{1}{q^j}\bigg)\bigg).
\end{equation}
By computing the first and second-order derivative of $V_j$ with respect to $p_i^j$, the following expressions are derived, i.e.,
\begin{equation}\label{first_active}
    \frac{\partial V_j}{\partial p_i^j}=\sum_{i\in\mathcal{I}}\bigg(\frac{S_i^j-A}{Aq^j}+\frac{c^jS_i^j(R_i+B)}{A({p_i^j})^2}\bigg),
\end{equation}
\begin{equation}
    \frac{\partial V^2_j}{\partial (p_i^j)^2}=-2\sum_{i\in\mathcal{I}}\bigg(\frac{c^jS_i^j(R_i+B)}{A(p_i^j)^3}\bigg)<0.
\end{equation}
Then, by setting the first-order derivative of $V_j$ as $0$, we can obtain the optimal strategy of RSU $j$ to UAV $i$ expressed as
\begin{equation}\label{phi_active}
\begin{split}
    \phi_i^j(\boldsymbol{p})&=\sqrt{\frac{q^jc^jS_i^j(R_i+B)}{A-S_i^j}}, \\
    &=\sqrt{\frac{q^jc^jS_i^j\big(R_i+\sum_{\forall k\in\mathcal{J}\backslash j}\frac{p_i^k}{q^k}\big)}{\sum_{j\in{\mathcal{J}}}S_i^j-S_i^j}}, \\
    &=\sqrt{\frac{q^jc^jS_i^j\big(R_i+\sum_{\forall k\in\mathcal{J}\backslash j}\frac{p_i^k}{q^k}\big)}{\sum_{\forall k\in\mathcal{J}\backslash j}S_i^k}},
\end{split}
\end{equation}
where $\boldsymbol{p}=\sum_{\forall k\in\mathcal{J}\backslash j}p_i^k$ represents the strategies of RSUs expect RSU $j$, which indicates that RSU $j$ determines its strategy considering the strategies of other RSUs. 

In the case where the second constraint of Eq. (\ref{U_i}) is inactive, we substitute the optimal strategies of UAV $i$ (as specified in Eq. (\ref{b_bar})) to the utility function of RSU $j$ as
\begin{equation}
    V_j=\sum_{i\in\mathcal{I}}\bigg(\big(p_i^j-c^j\big)\bigg(\frac{\delta_iS_i^j}{p_i^j}-\frac{1}{q^j}\bigg)\bigg).
\end{equation}
The expressions for the first-order and second-order derivatives of $V_j$ with respect to $p_i^j$ are given by
\begin{equation}
    \frac{\partial V_j}{\partial p_i^j}=\sum_{i\in\mathcal{I}}\bigg(-\frac{1}{q^j}+\frac{\delta_ic^jS_i^j}{(p_i^j)^2}\bigg),
\end{equation}
\begin{equation}
    \frac{\partial V^2_j}{\partial (p_i^j)^2}=-2\sum_{i\in\mathcal{I}}\frac{\delta_ic^jS_i^j}{(p_i^j)^3}<0.
\end{equation}
Similarly, equating the first-order derivative of $V_j$ to $0$, the optimal strategy of RSU $j$ to UAV $i$ is expressed as
\begin{equation}
    p_i^{j*}=\sqrt{\delta_iS_i^jq^jc^j}.
\end{equation}

 {Note that the second-order derivative of the utility function $V_j$ for RSU $j$ is negative, indicating its concavity. Furthermore, the strategy sets for the RSUs meet the essential criteria of being closed, bounded, and convex. Additionally, considering the finite nature of the RSU set and the continuity of their utility functions, the Nash equilibrium among the RSUs exists and is unique according to \textbf{Lemma \ref{Nash equilibrium}}. }

\end{proof}

\begin{theorem}
    A Stackelberg equilibrium is present in the multi-leader multi-follower Stackelberg game between RSUs and UAVs \cite{8758205}.
\end{theorem}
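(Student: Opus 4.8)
The plan is to prove existence by backward induction, leveraging the two results already established for the two stages of the game and then matching them against the two conditions in the Definition of Stackelberg equilibrium. The overall structure mirrors the definition: I need to exhibit a leader profile that is a Nash equilibrium among the RSUs, together with a follower profile that is each UAV's unique best response to that leader profile.

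First I would dispatch the follower condition (condition 2 of the Definition). By \textbf{Proposition 1}, for any fixed pricing vector $\boldsymbol{p}_i^*$ announced by the RSUs, the utility $U_i(\boldsymbol{b}_i,\boldsymbol{p}_i^*)$ is strictly concave in $\boldsymbol{b}_i$ (its second-order derivative is strictly negative) over a convex, closed feasible set carved out by the budget and nonnegativity constraints. Strict concavity over a convex feasible set guarantees that the maximizer $\boldsymbol{b}_i^*$ exists and is unique, and \textbf{Proposition 1} already supplies it in closed form through Eq. (\ref{b*}). This directly yields the exclusive best-response property demanded by condition 2.

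Next I would address the leader condition (condition 1). Here I would invoke \textbf{Theorem \ref{non}}: substituting the followers' best responses $b_i^{j*}$ into each $V_j$ reduces Stage I to a non-cooperative pricing game among the RSUs, whose reduced utilities are concave in $p_i^j$ in both the active and inactive budget-constraint regimes (the relevant second-order derivatives were shown to be negative), and whose strategy sets $[c^j,\bar{p}^j]$ are closed, bounded, and convex. With a finite player set and continuous, quasi-concave payoffs, \textbf{Lemma \ref{Nash equilibrium}} certifies that a Nash equilibrium $(p_i^j)^*$ exists, and the best-response fixed point given by Eq. (\ref{phi_active}) pins it down. This establishes condition 1.

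Finally, combining the two pieces gives the pair $\big((b_i^j)^*,(p_i^j)^*\big)$ that simultaneously satisfies both conditions, which is exactly a Stackelberg equilibrium by Definition 1. The main obstacle I anticipate is not existence per se, since both building blocks are already in hand, but rather the consistency of the coupling: I must verify that the $b_i^{j*}$ used when forming the leaders' reduced payoffs is the very response the followers actually play at the equilibrium prices, so that the backward induction is self-consistent. Because the follower response is unique (from strict concavity) and varies continuously with the prices, this coupling closes without circularity, and the equilibrium prices obtained in Stage I induce precisely the anticipated follower responses in Stage II.
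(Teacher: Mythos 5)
Your proposal is correct and follows essentially the same route as the paper's own proof: backward induction that combines \textbf{Proposition 1} (existence and uniqueness of each UAV's best-response bandwidth demand under strict concavity, giving condition 2 of the Definition) with \textbf{Theorem \ref{non}} and \textbf{Lemma \ref{Nash equilibrium}} (existence of a Nash equilibrium in the leaders' pricing subgame, giving condition 1). Your additional remark about verifying the self-consistency of the coupling between the leaders' anticipated follower responses and the followers' actual equilibrium play is a point the paper's terser proof leaves implicit, but it does not change the argument's structure.
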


\begin{proof}
The existence of the Nash equilibrium among RSUs has been formally established, as evidenced by \textbf{Theorem \ref{non}}  \cite{8758205}. Moreover, in Section \ref{UMU}, it has been demonstrated that UAVs possess the capability to select optimal strategies, maximizing their utilities for the given pricing strategies of all RSUs. Consequently, by considering the outcomes, it can be deduced that the Stackelberg game between RSUs and UAVs attains a Stackelberg equilibrium. 
\end{proof}

\begin{lemma}
    A Stackelberg equilibrium is guaranteed to be unique if the following three properties are satisfied \cite{9492053}:
    \begin{itemize}
        \item Positivity: $\phi_i^j(\boldsymbol{p})>0$.
        \item Monotonicity: If $\boldsymbol{p}'>\boldsymbol{p}$, then $\phi_i^j(\boldsymbol{p}')>\phi_i^j(\boldsymbol{p})$.
        \item Scalability: For $\forall \chi>1$, then $\chi\phi_i^j(\boldsymbol{p})>\phi_i^j(\chi\boldsymbol{p})$.
    \end{itemize}
    where $\phi_i^j(\boldsymbol{p})$ represents the optimal strategy of RSU $j$.
\end{lemma}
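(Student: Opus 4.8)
The plan is to verify the three properties listed in the lemma directly for the best-response function $\phi_i^j(\boldsymbol{p})$ derived in Eq.~(\ref{phi_active}), treating it as a standard interference function in the sense of the cited uniqueness criterion. To streamline the algebra, I would first introduce the shorthand $B=\sum_{\forall k\in\mathcal{J}\backslash j}\frac{p_i^k}{q^k}$ and $D=\sum_{\forall k\in\mathcal{J}\backslash j}S_i^k$, so that $\phi_i^j(\boldsymbol{p})=\sqrt{q^jc^jS_i^j(R_i+B)/D}$, and note that $B$ is a strictly increasing linear function of the competing prices $\{p_i^k\}_{k\neq j}$ while $D$ is independent of them. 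With this normalization each property becomes a one-variable statement about the map $x\mapsto\sqrt{q^jc^jS_i^j(R_i+x)/D}$.

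For Positivity, I would observe that every factor appearing under the radical is strictly positive: the spectrum efficiency $q^j$ and the bandwidth cost $c^j$ are positive, the budget $R_i>0$, the weighted sum $D>0$, and $S_i^j=\ln(SSIM_i^j/SSIM_i^{th})>0$ because any genuine migration target satisfies $SSIM_i^j>SSIM_i^{th}$. Hence the argument of the square root is strictly positive and $\phi_i^j(\boldsymbol{p})>0$. For Monotonicity, since $\boldsymbol{p}'>\boldsymbol{p}$ increases each competing price and all weights $1/q^k$ are positive, $B$ strictly increases; as the map above is strictly increasing in $x$, it follows immediately that $\phi_i^j(\boldsymbol{p}')>\phi_i^j(\boldsymbol{p})$.

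The only step demanding a genuine inequality is Scalability, which I expect to be the main (though still routine) obstacle. Here I would use that scaling all competing prices by $\chi$ replaces $B$ by $\chi B$, so the claim $\chi\phi_i^j(\boldsymbol{p})>\phi_i^j(\chi\boldsymbol{p})$ reduces, after squaring both positive sides, to $\chi^2(R_i+B)>R_i+\chi B$, equivalently $(\chi^2-1)R_i+(\chi^2-\chi)B>0$. For $\chi>1$ both coefficients $\chi^2-1$ and $\chi^2-\chi=\chi(\chi-1)$ are strictly positive, and since $R_i>0$ and $B>0$ the sum is strictly positive, establishing the inequality. It is precisely the presence of the fixed term $R_i$ inside the radical that makes the function a strict contraction under scaling, so I would flag that this step relies essentially on $R_i>0$.

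Having verified Positivity, Monotonicity, and Scalability, I would conclude by invoking the cited standard-function result, which guarantees that the iterated best-response map admits a unique fixed point. Combined with the already-established existence of the Nash equilibrium among the RSUs and the uniqueness of each follower's optimal response in Stage~II, this yields that the multi-leader multi-follower Stackelberg equilibrium is unique, completing the proof.
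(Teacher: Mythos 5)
Your proposal is correct and takes essentially the same approach as the paper: the paper also treats the standard-function criterion as a cited result and establishes uniqueness by verifying positivity, monotonicity, and scalability of $\phi_i^j(\boldsymbol{p})$ from Eq.~(\ref{phi_active}), where your squared inequality $\chi^2(R_i+B)>R_i+\chi B$ is exactly the comparison the paper makes by moving $\chi$ inside the radical and using $\chi^2>\chi>1$. The only differences are cosmetic: you spell out why $S_i^j>0$ (i.e., $SSIM_i^j>SSIM_i^{th}$) and flag the role of $R_i>0$, steps the paper declares to be ``clear.''
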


\begin{theorem}
    The Stackelberg equilibrium among RSUs and UAVs is unique.
\end{theorem}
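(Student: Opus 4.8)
The plan is to invoke the preceding Lemma, which reduces uniqueness of the Stackelberg equilibrium to verifying that the leaders' best-response function $\phi_i^j(\boldsymbol{p})$ from Eq. (\ref{phi_active}) satisfies positivity, monotonicity, and scalability. Since existence of the Stackelberg equilibrium and uniqueness of each follower's (UAV's) optimal bandwidth response for a fixed price profile were already established, it suffices to show that the Nash equilibrium among the leader RSUs is unique; this follows once the three standard-function properties hold. I would therefore check the three properties one at a time for the closed-form best response
\[
\phi_i^j(\boldsymbol{p})=\sqrt{\frac{q^jc^jS_i^j\big(R_i+\sum_{\forall k\in\mathcal{J}\backslash j}\frac{p_i^k}{q^k}\big)}{\sum_{\forall k\in\mathcal{J}\backslash j}S_i^k}}.
\]

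For positivity, I would note that $q^j$, $c^j$, and $R_i$ are positive quantities, that $S_i^j=\ln(SSIM_i^j/SSIM_i^{th})>0$ whenever the received similarity exceeds the threshold, and that each competitor price $p_i^k$ is bounded below by $c^k>0$; hence both the numerator and denominator under the radical are strictly positive, giving $\phi_i^j(\boldsymbol{p})>0$. For monotonicity, I would observe that the competitors' prices $p_i^k$ ($k\neq j$) enter $\phi_i^j$ only through the increasing term $\sum_{\forall k\in\mathcal{J}\backslash j}\frac{p_i^k}{q^k}$ inside the square root, with all remaining factors held fixed; therefore $\boldsymbol{p}'>\boldsymbol{p}$ strictly increases the radicand and, by monotonicity of the square root, yields $\phi_i^j(\boldsymbol{p}')>\phi_i^j(\boldsymbol{p})$.

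The scalability property is where the actual computation lies, and I expect it to be the main obstacle, though it stays elementary. Writing $\phi_i^j(\boldsymbol{p})=\sqrt{K\,(R_i+B)}$ with $K=\frac{q^jc^jS_i^j}{\sum_{\forall k\in\mathcal{J}\backslash j}S_i^k}>0$ and $B=\sum_{\forall k\in\mathcal{J}\backslash j}\frac{p_i^k}{q^k}\ge 0$, and noting that $B$ is homogeneous of degree one so that $B(\chi\boldsymbol{p})=\chi B$, the claim $\chi\phi_i^j(\boldsymbol{p})>\phi_i^j(\chi\boldsymbol{p})$ for $\chi>1$ reduces after squaring and dividing by $K$ to $\chi^2(R_i+B)>R_i+\chi B$. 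This rearranges to $(\chi^2-1)R_i+(\chi^2-\chi)B>0$, which holds strictly because $\chi>1$ forces $\chi^2-1>0$ and $\chi^2-\chi>0$ while $R_i>0$ and $B\ge 0$. Hence scalability is confirmed.

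Having verified positivity, monotonicity, and scalability, the preceding Lemma guarantees that the best-response mapping of the RSUs admits a unique fixed point, so the Nash equilibrium among the leaders is unique. Combined with the uniqueness of the followers' optimal bandwidth demands for any given price profile, which follows from the strict concavity of $U_i$ established in the earlier Proposition, the entire Stackelberg equilibrium $\big((b_i^j)^*,(p_i^j)^*\big)$ is unique, completing the argument.
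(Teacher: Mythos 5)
Your proposal is correct and follows essentially the same route as the paper's own proof: verifying positivity, monotonicity, and scalability of the best-response function $\phi_i^j(\boldsymbol{p})$ and invoking the standard-function lemma, with your scalability step $(\chi^2-1)R_i+(\chi^2-\chi)B>0$ being algebraically identical to the paper's comparison of $\chi^2 R_i + \chi^2 B$ against $R_i + \chi B$. Your write-up is, if anything, slightly more explicit than the paper's (e.g., in justifying positivity and in closing the argument by combining leader-side uniqueness with the followers' unique responses), but there is no substantive difference in approach.
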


\begin{proof}
Based on Eq. (\ref{phi_active}), it is clear that $\phi_i^j(\boldsymbol{p})>0$ is satisfied. Moreover, $\phi_i^j(\boldsymbol{p})$ is an increasing function for $\sum_{\forall k\in\mathcal{J}\backslash j}p_i^k$, i.e., $\phi_i^j(\boldsymbol{p})$ increases as $\boldsymbol{p}$ increases. Therefore, when $\boldsymbol{p}'>\boldsymbol{p}$, i.e., $\sum_{\forall k\in\mathcal{J}\backslash j}{p_i^k}'>\sum_{\forall k\in\mathcal{J}\backslash j}p_i^k$, we can get $\phi_i^j(\boldsymbol{p}')>\phi_i^j(\boldsymbol{p})$. Finally, since $\chi>1$, then $\chi^2>\chi$. Therefore, for $\forall \chi>1$, we have
\begin{equation}
    \begin{split}
        \chi\phi_i^j(\boldsymbol{p})&=\chi\sqrt{\frac{q^jc^jS_i^j\big(R_i+\sum_{\forall k\in\mathcal{J}\backslash j}\frac{p_i^k}{q^k}\big)}{\sum_{\forall k\in\mathcal{J}\backslash j}S_i^k}} \\
        &=\sqrt{\frac{q^jc^jS_i^j\big(\chi^2R_i+\chi^2\sum_{\forall k\in\mathcal{J}\backslash j}\frac{p_i^k}{q^k}\big)}{\sum_{\forall k\in\mathcal{J}\backslash j}S_i^k}} \\
        &>\sqrt{\frac{q^jc^jS_i^j\big(R_i+\chi\sum_{\forall k\in\mathcal{J}\backslash j}\frac{ p_i^k}{q^k}\big)}{\sum_{\forall k\in\mathcal{J}\backslash j}S_i^k}}=\phi_i^j(\chi\boldsymbol{p}).
    \end{split}
\end{equation}

Based on the above analysis, it has been demonstrated that the best response function of RSU $j$ in this case has a unique optimal point.
\end{proof}

 {In the case that this Stackelberg game has a Stackelberg equilibrium, each RSU independently determines its optimal strategy, and no RSU can improve its revenue by unilaterally altering its chosen strategy. Once all RSUs have declared their pricing strategies, the UAVs proceed to determine their optimal bandwidth demands to maximize their utilities.} 

 {However, obtaining the private information of UAVs in advance poses practical challenges for RSUs in real-world scenarios, primarily due to legitimate concerns regarding privacy protection. Therefore, the development of a privacy-preserving algorithm becomes imperative for the RSUs to derive their optimal policies under such circumstances.}

\section{Tiny Multi-agent Deep Reinforcement Learning Algorithm }\label{Algorithm}

 {In complex decision-making scenarios, advanced AI techniques (e.g., DRL) offer promising solutions for devising incentive mechanisms under privacy concerns\cite{9492053,9430722,8984310}. In the preceding section, a multi-leader multi-follower Stackelberg game is formulated between RSUs and UAVs, demonstrating the existence and uniqueness of the Stackelberg equilibrium \cite{du}. In this section, we model the Stackelberg game between RSUs and UAVs as an MDP \cite{8758205, MADDPG}. To tackle the challenge posed by incomplete information, we propose a Tiny MADRL algorithm that aims to achieve the Stackelberg equilibrium by exploring the optimal solutions for both RSUs and UAVs within the Stackelberg model. 
The Tiny MADRL algorithm empowers individual RSUs as agents to rapidly converge to near-optimal decision-making capabilities, all within a decentralized and privacy-conscious framework \cite{9492053}. Contrasting traditional approaches in DRL that focus on estimating fixed policies or single-step models, the proposed method leverages Markov properties to effectively decompose the problem.}

\subsection{MDP for the Stackelberg Game between RSUs and UAVs}
 {An environment that follows an MDP is needed to train DRL agents \cite{9235308}, which is formulated by conceptualizing the dynamic relationship between the RSUs and UAVs as a Stackelberg game. This strategic interaction is then formally cast as a multi-agent MDP. Within this framework, each RSU functions as an independent agent, navigating its decision-making within the MDP context. The environment is populated by the $I$ UAVs, collectively contributing to the complexity and richness of the learning environment. Specifically, let $\mathcal{M}^j=\{\mathcal{S}^j,\mathcal{A}^j,\mathcal{P}^j,\mathcal{R}^j,\gamma^j\},j\in{\mathcal{J}}$ represent a MDP \cite{lifirst}, where $\mathcal{S}^j$, $\mathcal{A}^j$, $\mathcal{P}^j$, $\mathcal{R}^j$ and $\gamma^j$ represent state space, action space, state transition probability, reward function, and the discounted factor for RSU $j$, respectively \cite{8758205, lifirst}. }

In each time step $t$, where $t\in{\mathcal{T}=\{0,\ldots,t,\ldots, T\}}$, RSU $j$ interacts with the environment to determine its current state, denoted as $\boldsymbol{s}^j(t)$. 
Throughout the training process, the DRL agent, representing RSU $j$, engages in interactions with the environment. At each time step, when RSU $j$ executes the action $\boldsymbol{p}^j(t)$ based on the current state $\boldsymbol{s}^j(t)$, the environment provides an immediate reward $r^j(t)$ \cite{9235308}. RSU $j$ takes on the role of a game leader, responsible for selecting the action, i.e., the pricing policy $\boldsymbol{p}^j(t)$. Subsequently, a UAV acts as a follower, making an optimal strategy decision based on Eq. (\ref{b*}). Following this, the environment provides a reward $r^j(t)$ to RSU $j$, considering the strategies chosen by all UAVs. The system contains a finite-buffer queue, denoted as $\mathcal{D}$, to store historical operation data, and the capacity of the finite-buffer queue is defined as $L$. Information relevant to RSUs is extracted from this buffer to generate new states, and this process initiates the next time step \cite{9492053}.

\subsubsection{State Space}
We denote $\mathcal{S}^j\triangleq\{\boldsymbol{s}^j\}$ as the state space of RSU $j$. The state of RSU $j$ includes its current status and the cumulative history of all past policies enacted by RSU $j$ and UAV towards RSU $j$ \cite{zhan2020learning}. 
 {The price vector of RSU $j$ and the bandwidth demand vector of all UAVs regarding RSU $j$ in the time step $t$ are denoted as $\boldsymbol{p}^j(t)$ and $\boldsymbol{b}^j(t)$, respectively. The state of RSU $j$ is determined by the historical records of the previous $L$ games involving RSU $j$ and all UAVs. Consequently, at the time step $t$, the state of RSU $j$ is represented as $\boldsymbol{s}^j(t)=\{\boldsymbol{p}^j(t-L),\boldsymbol{b}^j(t-L),\ldots,\boldsymbol{p}^j(t-1),\boldsymbol{b}^j(t-1)\}$. }

\subsubsection{Action Space}
 {We denote $\mathcal{A}^j\triangleq\{\boldsymbol{p}^j\}$ as the action space of RSU $j$. At each time step $t$, RSU $j$ decides its action $\boldsymbol{p}^j(t)$. This decision-making process relies on the information encapsulated in the observed state $\boldsymbol{s}^j(t)$. }


\subsubsection{Reward Function}
We denote $\mathcal{R}^j\triangleq\{r^j\}$ as the reward function of RSU $j$. After the state transition, RSU $j$ acquires a reward that relies on the current state $\boldsymbol{s}^j(t)$ and the corresponding action $\boldsymbol{p}^j(t)$ \cite{8758205}.
The reward function of RSU $j$ is defined as the utility function of RSU $j$ that we construct in the Stackelberg game. At time step $t$, the reward function for RSU $j$ is represented as $r^j(t) = V_j\left(\boldsymbol{p}^j(t), \boldsymbol{p}^{-j}(t), \boldsymbol{b}^j(t)\right)$. The average reward across all RSUs is denoted as $\frac{\sum_{j\in\mathcal{J}} V_j(\cdot)}{J}$.

\begin{figure*}[ht]
\centering
\includegraphics[width=0.95\textwidth]{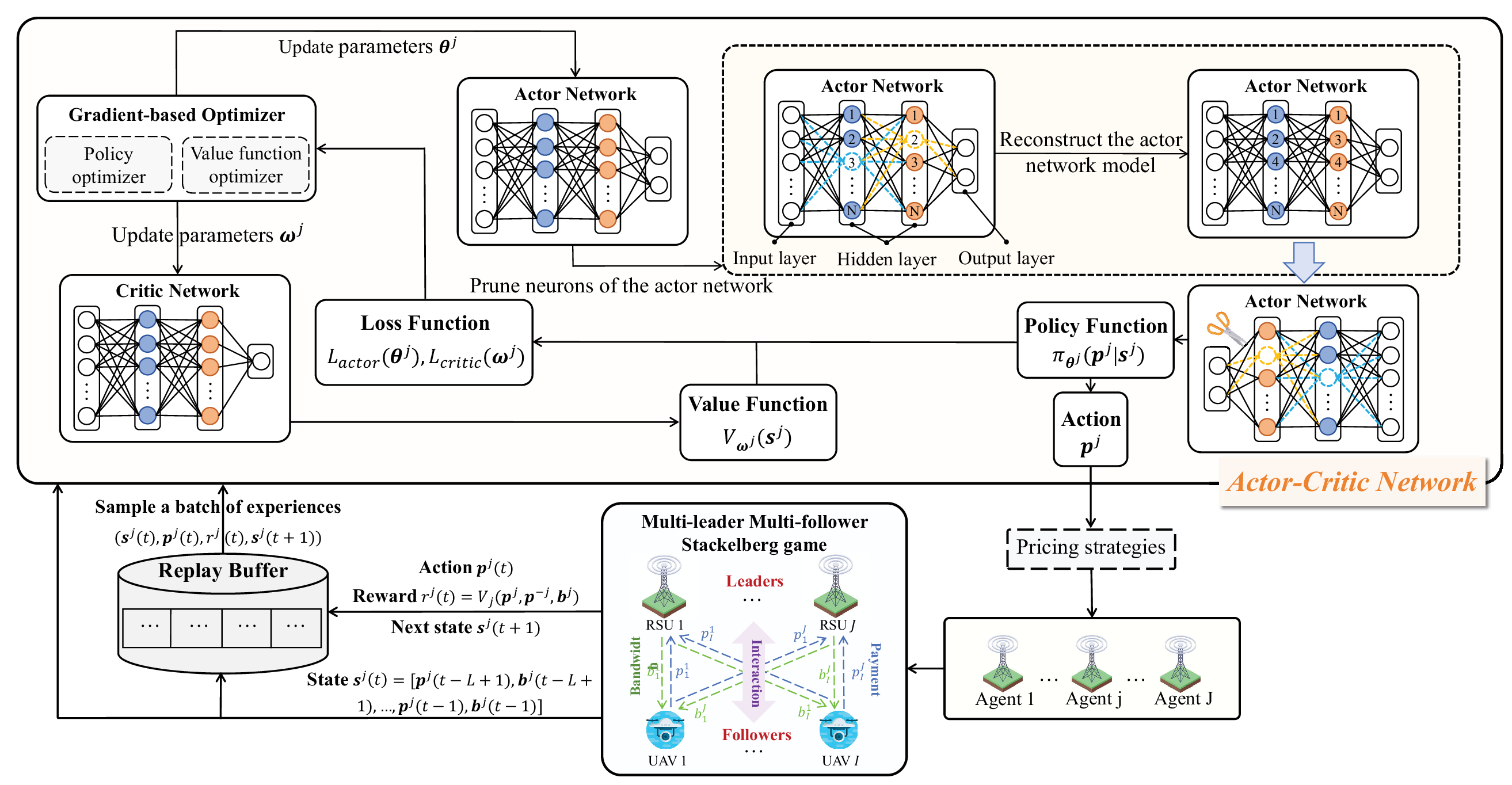}
\caption{ {The Tiny MADRL framework of the dynamic structured pruning algorithm for the Tiny MADRL network. Inactive neurons are represented by dashed circles, while deleted weights are denoted by dotted lines.}}
\label{MAPDRL}
\end{figure*} 

In the actor-critic network setup, there are two integral components, i.e., the actor network and the critic network \cite{cheng}. PPO is a DRL algorithm based on policy gradients. It enhances stability and convergence by employing proximal optimization techniques on the policy, ensuring more reliable and efficient learning processes. We denote the actor-critic network in the Tiny MADRL framework as $(\boldsymbol \theta^j, \boldsymbol \omega^j), j\in\mathcal{J}$. In the PPO algorithm, for each RSU $j$, on the one hand, the actor network operates as a policy function  ${\pi_{\boldsymbol \theta^j}}(\boldsymbol{p}^j|\boldsymbol{s}^j)$ with parameters $\boldsymbol \theta^j$. This function is instrumental in generating the action of RSU $j$, i.e., the pricing strategy $\boldsymbol{p}^j$, and facilitating interactions with the environment. The critic network, on the other hand, represented by the value function $V_{\boldsymbol \omega^j}(\boldsymbol{s}^j)$ parameterized by $\boldsymbol \omega^j$, evaluates the performance of the actor (i.e., the agent) and guides the actions of actors in subsequent phases.  {Specifically, the primary goal of the critic network is the minimization of the Temporal Difference (TD) error, expressed as
\begin{equation}
    d^j=r^j(t)+\gamma^j V_{\boldsymbol \omega^j}\big(\boldsymbol{s}^j(t+1)\big)-V_{\boldsymbol \omega^j}\big(\boldsymbol{s}^j(t)\big),
\end{equation}
where $V_{\boldsymbol \omega^j}\big(\boldsymbol{s}^j(t)\big)$ and $V_{\boldsymbol \omega^j}\big(\boldsymbol{s}^j(t+1)\big)$ correspond to the value functions associated with the current state $\boldsymbol{s}^j(t)$ and the subsequent state $\boldsymbol{s}^j(t+1)$, respectively.
Therefore, the loss function for the critic network is obtained through the minimization of the expected value of the squared Temporal Difference (TD) value, which can be represented as \cite{li2022compact}
\begin{equation}
\begin{split}
    \mathop{\min}\limits_{\boldsymbol \omega^j}L_{critic}(\boldsymbol \omega^j) = \mathop{\min}\limits_{\boldsymbol \omega^j}\mathbb{E}\big[\big(r^j(t)+\gamma^j V_{\boldsymbol \omega^j}\big(\boldsymbol{s}^j(t+1)\big) \\
    -V_{\boldsymbol \omega^j}\big(\boldsymbol{s}^j(t)\big)\big)^2\big].
\end{split}
\end{equation}
In addition, the aim of the actor network is defined as
\begin{equation}
\begin{split}
    \mathop{\max}\limits_{\boldsymbol \theta^j}L_{actor}(\boldsymbol \theta^j)=\mathop{\max}\limits_{\boldsymbol \theta^j}\mathbb{E}\big[\min\big(f^j(\boldsymbol \theta^j)\hat A_{{\pi_{\boldsymbol \theta^j}}}(\boldsymbol{s}^j,\boldsymbol{p}^j), \\
    \eta\big(f^j(\boldsymbol \theta^j)\big)\hat A_{{\pi_{\boldsymbol \theta^j}}}(\boldsymbol{s}^j,\boldsymbol{p}^j)\big)\big],
\end{split}
\end{equation}
where $f^j(\boldsymbol \theta^j)=\frac{{\pi_{\boldsymbol \theta^j}}(\boldsymbol{p}^j|\boldsymbol{s}^j)}{\pi_{\hat{\boldsymbol\theta}^j}(\boldsymbol{p}^j|\boldsymbol{s}^j)}$, $\hat{\boldsymbol \theta}^j$ is the parameter of the strategy used for sampling $\boldsymbol{p}^j$, and $\pi_{\hat{\boldsymbol\theta}^j}(\boldsymbol{p}^j|\boldsymbol{s}^j)$ denotes the policy employed for importance sampling \cite{9492053}. $\eta(f^j)$ is a piece-wise function with intervals given as follows \cite{9492053}:
\begin{equation}
    \eta(f^j)=
    \begin{cases}
        1+\kappa,&f^j>1+\kappa, \\
        f^j, &1-\kappa\le{f^j}\le{1+\kappa}, \\
        1-\kappa, &f^j<1-\kappa, 
    \end{cases}
\end{equation}}
 {where $\kappa$ is an adjustable hyper-parameter, and $\hat A_{{\pi_{\boldsymbol \theta^j}}}(\boldsymbol{s}^j,\boldsymbol{p}^j)$ denotes the estimator for the advantage function $A_{{\pi_{\boldsymbol \theta^j}}}(\boldsymbol{s}^j,\boldsymbol{p}^j)$, which is calculated as
\begin{equation}
    \hat A_{{\pi_{\boldsymbol \theta^j}}}\big(\boldsymbol{s}^j(t),\boldsymbol{p}^j(t)\big)=-V_{{\pi_{\boldsymbol \theta^j}}}\big(\boldsymbol{s}^j(t)\big)+\sum_{l=0}^{\infty}(\gamma^j)^lr(t+l).
\end{equation}}

The Tiny MADRL framework utilizes classic neural networks, specifically DNNs, to parameterize the actor-critic network \cite{ji2021clnet}. Considering the network structure, actor and critic networks typically consist of fully connected layers comprising an input layer, multiple hidden layers, and an output layer housing numerous parameters like neurons and weights \cite{ahmed2019deep}. The following analysis is for RSU $j$, $j\in\mathcal{J}$. We consider an actor network with $H$ layer, and denote the weights in the $h$-th fully connected layer as $\boldsymbol{\theta}^{j(h)}$, where $h\in\{1,2,\ldots,H\}$. By setting the bias of the DNN to $0$ and inputting the state $\boldsymbol{s}(t)$ at time step $t$ into the first layer, the output of the first layer is calculated as
\begin{equation}
    \boldsymbol o^{j(1)} = \sigma^{j(1)}\big(\boldsymbol{\theta}^{j(h)}\boldsymbol{s}^j(t)\big),
\end{equation}
where $\sigma^{j(1)}$ represents the nonlinear response of the first layer, usually set to the ReLU function. The output of each layer in the network is fed as the input to the subsequent layer \cite{ahmed2019deep}. Therefore, the output of the $h$-th layer is expressed as 
\begin{equation}
    \boldsymbol o^{j(h)} = \sigma^{j(h)}\big(\boldsymbol\theta^{j(h)}\boldsymbol o^{j(h-1)}\big).
\end{equation}
Finally, the actor network outputs the action in time step $t$, i.e., the price strategy $\boldsymbol{p}^j(t)$, which is shown as
\begin{equation}
    \boldsymbol{p}^j(t)=\sigma^{j(H)}\big(\boldsymbol\theta^{j(H)}\boldsymbol o^{j(H-1)}\big).
\end{equation}

\subsection{ The Parameters of Actor-Critic Network Update}
 {Furthermore, we leverage the pruning techniques into DRL to eliminate neurons and weights that do not significantly contribute to the performance of the actor network. A binary mask $\boldsymbol m^{(h)}$ is introduced for each neuron to indicate the pruning status of that neuron, i.e., $1$ for no pruning and $0$ for pruning \cite{9727767}.} Then, the action output from actor network is expressed as
\begin{equation}
    \boldsymbol{p}^j(t)=\sigma^{j(H)}\big(\boldsymbol \theta^{(H)}\boldsymbol o^{j(H-1)}\odot \boldsymbol m^{j(H)}\big),
\end{equation}
where $\odot$ denotes the Hadamard product. Then, the loss function of actor network is
\begin{equation}
\begin{split}
    L_{actor}(\boldsymbol \theta^j)=\mathbb E \big[\min\big(f^j(\boldsymbol \theta^j,\boldsymbol m)\hat A_{{\pi_{\boldsymbol \theta^j}}}(\boldsymbol{s}^j,\boldsymbol{p}^j), \\
    \eta\big(f^j(\boldsymbol \theta^j,\boldsymbol m)\big)\hat A_{{\pi_{\boldsymbol \theta^j}}}(\boldsymbol{s}^j,\boldsymbol{p}^j)\big)\big].
\end{split}
\end{equation}

Upon the accumulation of $D$ records in the replay buffer, the actor and critic networks undergo updates. Specifically, RSU $j$ updates the parameters of the actor network using the gradient ascent method, which can be expressed as
\begin{equation}
    {\boldsymbol{\theta}^{j(h)}}'=\boldsymbol{\theta}^{j(h)}-l^j_1\frac{\partial L_{actor}(\boldsymbol \theta^j)}{\partial \big(\boldsymbol o^{j(h)}\odot \boldsymbol m^{j(h)}\big)}\cdot\frac{\partial \big(\boldsymbol o^{j(h)}\odot \boldsymbol m^{j(h)}\big)}{\partial \boldsymbol{\theta}^{j(h)}},
    \label{theta}
\end{equation}
where $l^j_1$ denotes the learning rate employed for training the actor network, and ${\boldsymbol{\theta}^{j(h)}}'$ is the updated parameter of the actor network. Moreover, the parameters of the critic network are updated through the gradient descent method as follows \cite{8758205}:
\begin{equation}
    {\boldsymbol{\omega}^{j(h)}}'=\boldsymbol{\omega}^{j(h)}-l^j_2\frac{\partial L_{critic}(\boldsymbol \omega^j)}{\partial \boldsymbol{\omega}^{j(h)}},
    \label{omega}
\end{equation}
where $l^j_2$ denotes the learning rate utilized in the training process of the critic network, while ${\boldsymbol \omega^{j(h)}}'$ signifies the updated parameter of the critic network.

\subsection{Pruning Algorithm}
 {Due to the challenges associated with using unstructured pruning techniques for accelerating DRL training, which often results in irregular network structures, we opted for using structured pruning methods. Structured pruning is an approach to reduce model complexity by strategically eliminating redundant neurons or connections. Dynamic structured pruning of non-essential neurons encompasses two key steps: (i) determining the pruning threshold, and (ii) updating the binary mask used for pruning \cite{Liu}. The pruning threshold plays a crucial role in identifying the parameters or connections that are essential to retain, while identifying and eliminating those that are unnecessary during the pruning process \cite{zhaodynamic}.
We employ a dynamic pruning threshold, which is defined as \cite{livne2020pops}
\begin{equation}
    \psi(t) = \sum_{n=1}^{N}\sum_{h=1}^{H}\phi_n^{(h)}\cdot w(t),
    \label{psi}
\end{equation}
\begin{equation}
    w(t) = \check w+(\hat w-\check w)\bigg(1-\frac{t-t_0}{M\nabla t}\bigg)^3,
\end{equation}
where $\phi_n^{(h)}$ and $M$ represent the neuronal importance of the $n$-th neuron of layer $h$ and the total number of pruning steps, respectively. $t_0$ and $\nabla$ represent the starting epoch of gradual pruning and the pruning frequency, respectively. $w(t)$, $\hat w$, and $\check w$ correspond to the current sparsity in epoch $t$, initial sparsity, and target sparsity \cite{livne2020pops}, respectively. This dynamic pruning approach is designed to adaptively enhance the sparsity of the model as iterations progress, offering a more refined and effective method for structured pruning. Neurons are sorted based on their importance, from the least to the most significant. Neurons that are ranked below the established threshold are then pruned, contributing to the overall sparsity of the model.} The mask of the $n$-th neuron of layer $h$ is updated as
\begin{equation}
    m_n^{(h)}=
    \begin{cases}
        1, &\mathrm{if} \:\mathrm{abs} \left[m_n^{(h)}, \theta_n^{(h)}\right]\ge\psi , \\
        0, &\mathrm{otherwise}.
    \end{cases}
    \label{m}
\end{equation}

\begin{algorithm}[t]
\label{MAPDRL_algorithm}
\DontPrintSemicolon
\SetAlgoLined
\KwIn {State $\boldsymbol{s}^j$.}
\KwOut {Compact Tiny MADRL model $(\boldsymbol{\theta}^j, \boldsymbol{\omega}^j)^{(H)}, j\in \mathcal{J}$.}
Initialize Tiny MADRL model, training episodes $T$, reward $r^j$, and binary mask $\boldsymbol{m}^j$. \;
\For{\rm{time step} $t=1$ to $T$}
{
    Agents interact with UAVs. \;
    Compute neuron importance $\phi_n^{(h)}$. \;
    Update actor network parameter $\boldsymbol{\theta}^{j(h)}$ and critic network parameter $\boldsymbol{\omega}^{j(h)}$ by Eqs. (\ref{theta}) and (\ref{omega}), respectively.\;
    Compute dynamic pruning threshold $\psi^{(h)}$ by Eq. (\ref{psi}).\;
    Update binary mask $\boldsymbol m^{j(h)}$ by Eq. (\ref{m}).\;
    \If{$\phi_n^{(h)} < \psi$} {
        Remove $n$-th neuron in $h$-th layer and associated parameters $\boldsymbol{\theta}$ from actor network.\;
    }
}
Reconstruct the compact Tiny MADRL model.
\caption{Dynamic Structured Pruning in the Tiny MADRL Framework.}
\end{algorithm}

 {Figure \ref{MAPDRL} shows the proposed Tiny MADRL framework in this paper. The framework of the proposed Tiny MADRL algorithm is demonstrated, where the DRL model is first trained, and then a dynamic structural pruning method is employed in the actor-critic network, eliminating groups of the least important neurons. For instance, the importance of the $3$-th neuron in the $2$-th layer $\phi_3^{(2)}$ is the smallest value in layer $2$, so it is pruned out. The process of dynamic structured pruning is shown in \textbf{ Algorithm \ref{MAPDRL_algorithm}}. In the Tiny MADRL model, we adopt a fully-connected DNN architecture for the actor network, consisting of $H$ layers. \textbf{ Algorithm \ref{MAPDRL_algorithm}} consists of a two-step process as presented in Fig. \ref{MAPDRL}, i.e., training the DRL model initially (lines $4$ to $5$), followed by the removal of unimportant neurons using the dynamic pruning threshold (lines $6$ to $10$). 
The complexity of \textbf{Algorithm \ref{MAPDRL_algorithm}} over $T$ episodes is $\mathcal O\big(TV\big)+\mathcal O\big(T\sum_{h=1}^{H-1}u^{(h)}\big)$. It is characterized by two main components, i.e., $\mathcal O(TV)$, reflecting the computation cost associated with each episode and the size of the state vector $V$, and $\mathcal O\big(T\sum_{h=1}^{H-1}u^{(h)}\big)$, representing the cumulative computational complexity over $T$ episodes with respect to the number of neurons $u^{(h)}$ in each hidden layer $h$ up to the penultimate layer.}

\section{Numerical Results}\label{Results}
This section presents numerical results to empirically demonstrate the efficacy of the proposed scheme. Similar to \cite{8792382,zhongblockchain}, the critical parameters of the experiment are listed in Table \ref{parameters}. 

\begin{table}[t]
  \begin{center}
    \caption{Key Parameters in the Simulation.}
    \label{parameters}
    \begin{tabular}{l|c} 
    \toprule 
      \textbf{Parameters} & \textbf{Values}\\
      \hline
      Additive white Gaussian noise $\sigma^{j2}$ from RSU $j$ & $[-116, -112] \rm{dBm}$ \\ 
      transmitting a rendered screen to UAVs   \\
      Channel gain $g^j$ from RSU $j$ transmitting a & $[-25, -22] \rm{dB}$   \\
      rendered screen to UAVs\\
      Transmit power $m^j$ from RSU $j$ transmitting a & $[20, 25] \rm{dBm}$  \\
      rendered screen to UAVs\\
      The brightness similarity $l_i^j$ between the rendered & $[0,1]$ \\
      image in RSU $j$ and the received image by UMUs \\
      The contrast similarity $c_i^j$ between the rendered & $[0,1]$ \\
      image in RSU $j$ and the received image by UMUs \\
      The structure similarity $s_i^j$ between the rendered & $[0,1]$ \\
      image in RSU $j$ and the received image by UMUs \\
      Threshold of minimum SSIM $SSIM_i^{th}$ of UAV $i$ & $[0.5, 0.55]$ \\
      Parameter $\delta_i$ of UAV $i$ & $[10, 20]$ \\
      The bandwidth cost $c^j$ & $[1,4]$ \\
      The upper bandwidth price $\bar p^j$ & $[5,35]$ \\
      \bottomrule
    \end{tabular}
  \end{center}
\end{table}

\begin{figure}[t]
\centering
\includegraphics[width=0.45\textwidth]{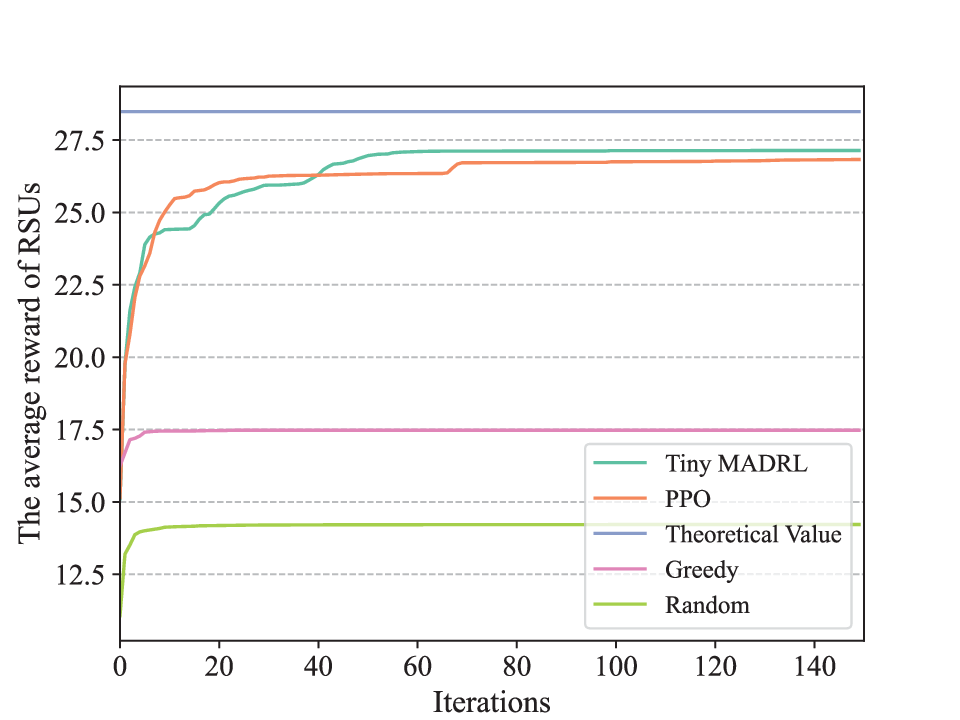}
\caption{Convergence of the Tiny MADRL algorithm.}
\label{convergence}
\end{figure}

Firstly, we demonstrate the convergence of the Tiny MADRL algorithm proposed in this paper. As iteration increases, Fig. \ref{convergence} illustrates the alteration in the average reward of RSUs through different algorithms. In Fig. \ref{convergence}, we consider $5$ RSUs and $15$ UAVs in the system, and we visualize the convergence behavior of the Tiny MADRL algorithm introduced.  {We find that the Tiny MADRL algorithm converges faster than the traditional PPO algorithm. Moreover, the final results achieved by the Tiny MADRL algorithm are closer to the theoretical value than the traditional PPO, greedy, and random algorithms.} Note that the theoretical values are obtained from the proposed equations in Section \ref{Stackelberg}.

\begin{figure}[t]
\centering
\subfigure[The pricing strategies of RSUs.]{
\centering
\includegraphics[width=0.45\textwidth]{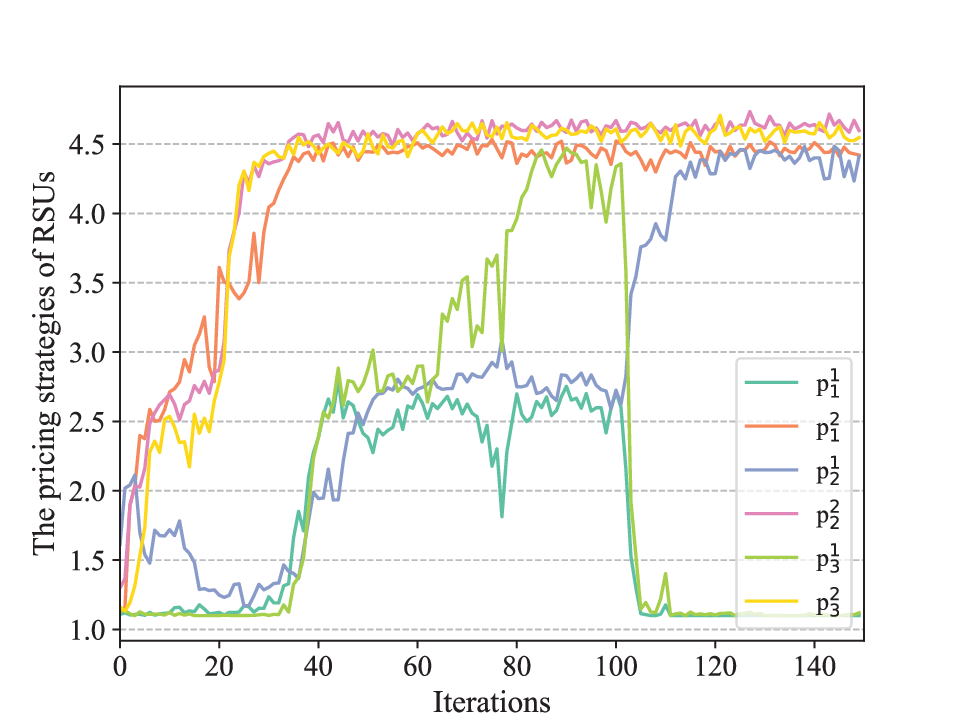}
\label{price}
}
\subfigure[The bandwidth demands of UAVs.]{
\centering
\includegraphics[width=0.45\textwidth]{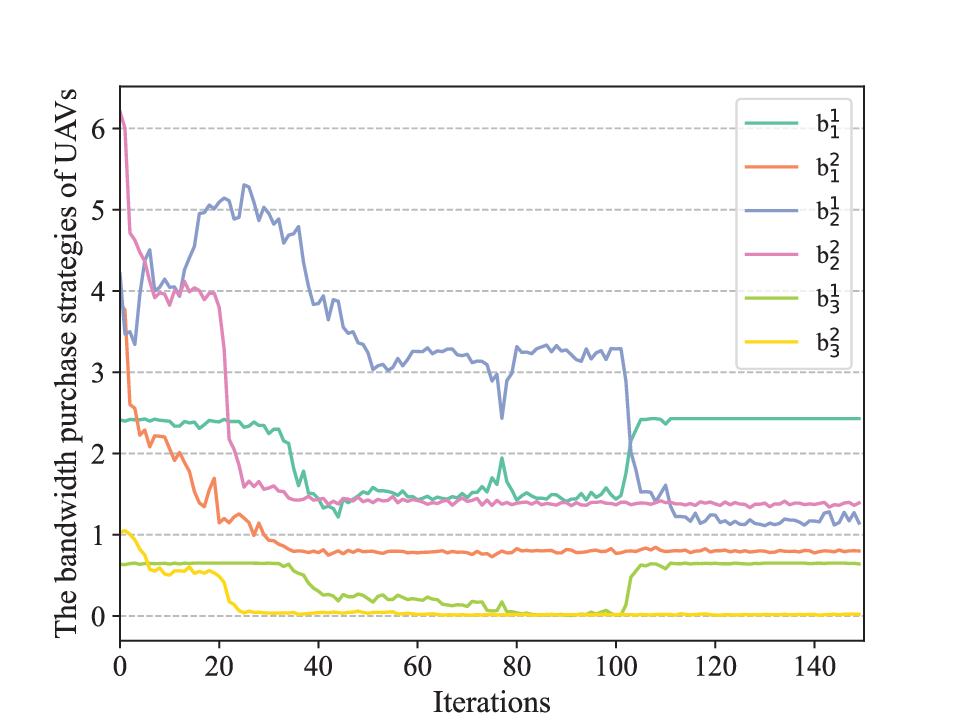}
\label{bandwidth}
}
\caption{The strategies of RSUs and UAVs obtained by the Tiny MADRL algorithm.}
\label{strategy}
\end{figure}

Figure \ref{strategy} shows the dynamic evolution of strategies employed by RSUs and UAVs as the number of iterations increases. Figure \ref{price} shows the variation of strategies of RSUs over multiple iterations. The experimental setup assumes the presence of $2$ RSUs and $3$ UAVs, resulting in the $6$ RSU strategies, each represented by a curve in Fig. \ref{price}. The convergence of strategies of all RSUs is evident in Fig. \ref{price}. Notably, the strategy of each RSU for each UAV is crafted to maximize its utility. 
Moreover, the graphical representation in Fig. \ref{bandwidth} elucidates the dynamic evolution of strategies of UAVs concerning the iteration count. From Fig. \ref{bandwidth}, we can see that the strategies of the UAVs all converge around the $110$-th iteration. 
A comparison between Fig. \ref{price} and Fig. \ref{bandwidth} reveals that the bandwidth price set by the RSU and the bandwidth demand of the UAV exhibit opposing patterns. For instance, the bandwidth price set by RSU $1$ for UAV $1$ initially increases and then decreases, while the bandwidth demand of UAV $1$ from RSU $1$ decreases and then increases. In summary, both the strategies of RSUs and UAVs can reach convergence through the Tiny MADRL algorithm. For the same RSU and the same UAV, the higher the bandwidth price set by the RSU, the less bandwidth is demanded for that UAV.

\begin{figure}[t]
\centering
\subfigure[Relation between the average reward of RSUs and the bandwidth cost $c$.]{
\centering
\includegraphics[width=0.45\textwidth]{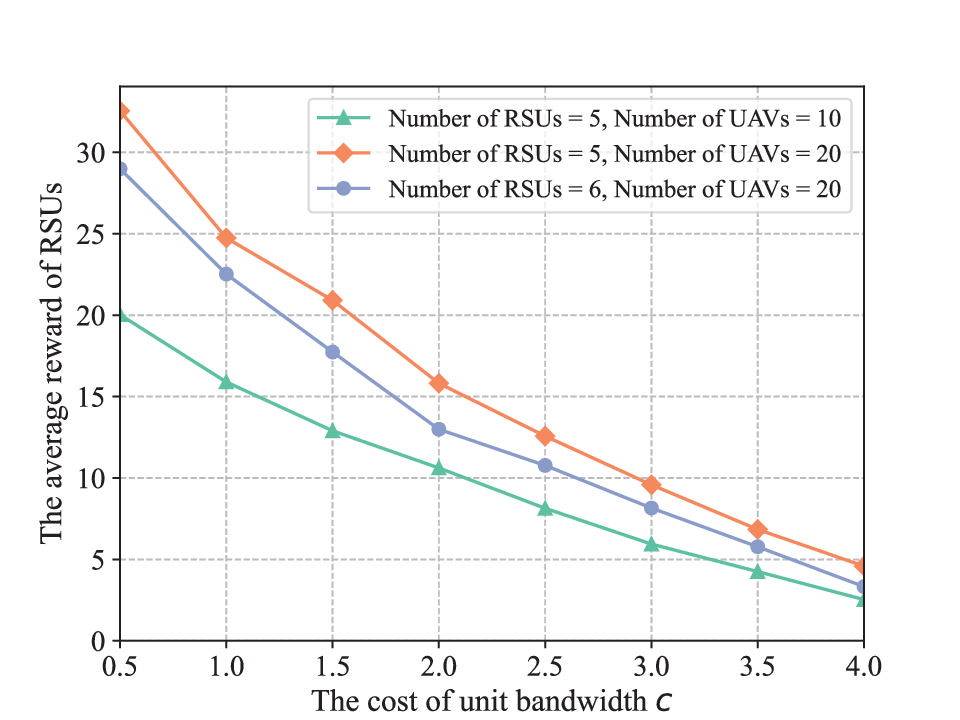}
\label{cost}
}
\subfigure[Relation between the average reward of RSUs and the upper bandwidth selling price $\bar p$.]{
\centering
\includegraphics[width=0.45\textwidth]{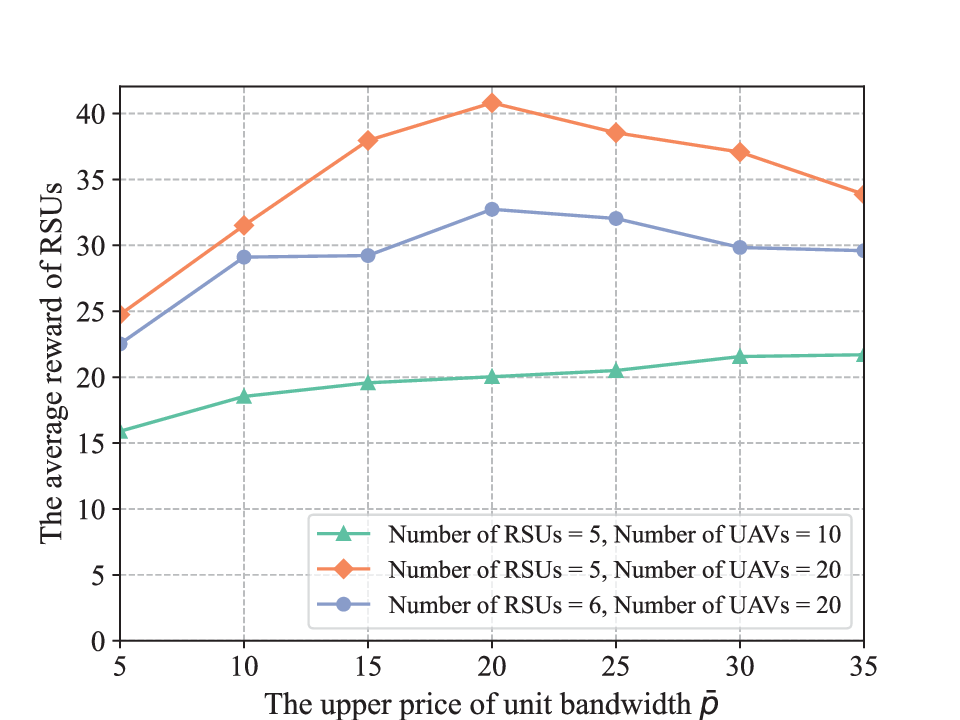}
\label{p_max}
}
\caption{Relation between the average reward of RSUs and the parameters.}
\label{c_p}
\end{figure}

 {Figure \ref{c_p} illustrates the impact of increasing bandwidth cost, denoted as $c$, and the upper bandwidth price, denoted as $\bar p$,  on the average reward for RSUs. As shown in Fig. \ref{cost}, the average reward declines with rising bandwidth costs. This is due to a fixed upper bandwidth price, the price strategies that RSUs can choose are relatively fixed, and costs continue to rise, resulting in a decreasing gap between the bandwidth selling price of RSUs and the bandwidth cost. Consequently, the utility of RSUs decreases, as defined by Eq. (\ref{V}). 
Therefore, an increase in the cost of bandwidth results in a decrease in average reward. As the quantity of UAVs expands under the condition of a constant number of RSUs, the average reward of RSUs also rises due to the growing demand for bandwidth. It is important to note that our experiments assume unlimited resources from RSUs. However, with a constant number of UAVs, increasing the number of RSUs leads to a decrease in average reward. This is likely due to the fixed demand for bandwidth from UAVs, and an increase in RSUs diminishes the average reward per RSU. In Fig. \ref{p_max}, we can see that in some cases the average reward of RSUs grows to a maximum value and then decreases. The average reward of RSUs initially increases with a higher $\bar p$, as RSUs can set a higher bandwidth selling price. However, beyond a certain point, an elevated $\bar p$ may result in a decrease in the average reward due to a potential reduction in the bandwidth demands of UAVs. The observed increase in average reward with a growing number of UAVs, while the number of RSUs remains constant, is attributed to a larger demand for bandwidth. Conversely, an increase in the number of RSUs, with a constant number of UAVs, leads to a decrease in average reward. }

\begin{figure}[t]
\centering
\subfigure[The average reward of RSUs for different numbers of RSUs under different algorithms.]{
\centering
\includegraphics[width=0.45\textwidth]{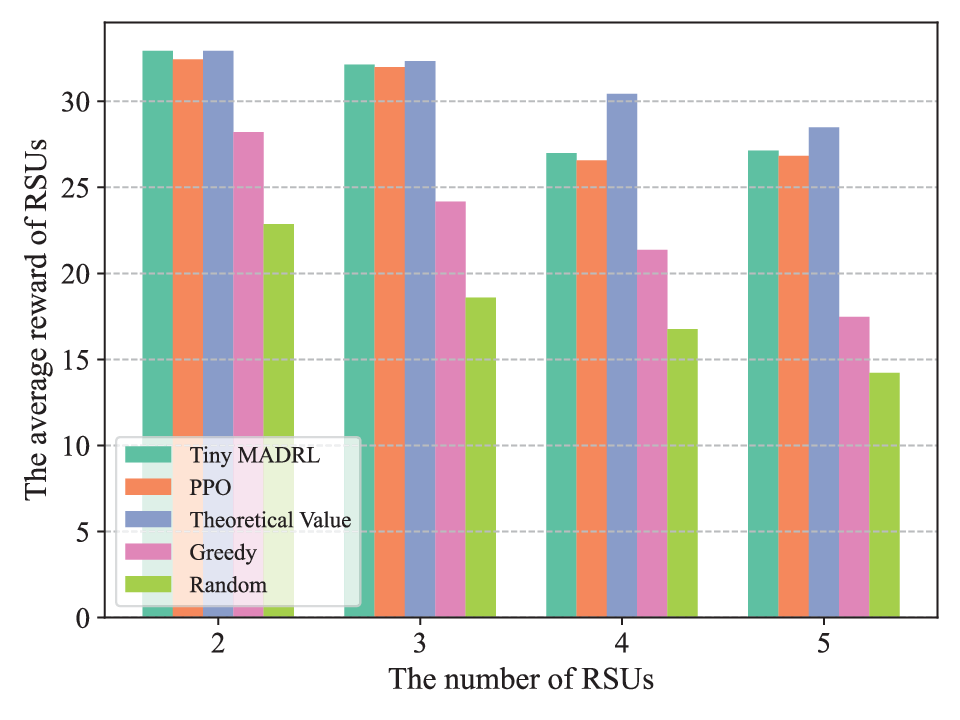}
\label{RSU_reward}
}
\subfigure[The average reward of RSUs for different numbers of UAVs under different algorithms.]{
\centering
\includegraphics[width=0.45\textwidth]{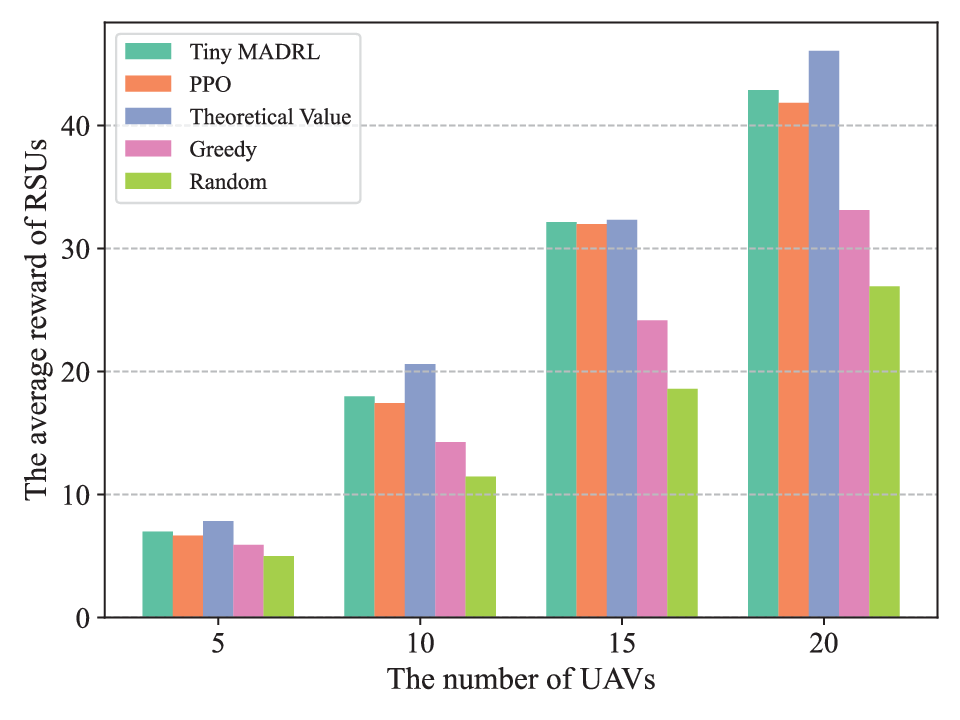}
\label{UAV_reward}
}
\caption{The average reward of RSUs for different numbers of UAVs/RSUs under different algorithms.}
\label{reward}
\end{figure}

Figure \ref{reward} shows the average reward of RSUs under different numbers of RSUs solved by different algorithms for the same number of UAVs. From Fig. \ref{reward}, the Tiny MADRL algorithm, introduced in this paper, demonstrates performance closest to theoretical values, indicating its superior efficacy. Our proposed Tiny MADRL algorithm can show better performance than the traditional PPO algorithm. 
Figure \ref{RSU_reward} illustrates the relationship between the number of RSUs and the average reward of RSUs with a fixed number of UAVs set to $15$.  {The average reward of all RSUs decreases with the increase of the number of the RSUs. This is because the same number of UAVs may request the same amount of resources, resulting in about the same revenue for RSUs, the more RSUs there are, the less average reward RSUs receive.} Figure \ref{UAV_reward}, configured with a fixed number of $3$ UAVs, reveals a positive correlation between the average reward of RSUs and the number of UAVs. The reason is that the increase in UAVs creates a surge in demand for bandwidth resources, empowering RSUs to attract more UAVs for bandwidth purchases. Simultaneously, RSUs can strategically elevate their bandwidth prices, maximizing their rewards.

\section{Conclusion}\label{Conclusion}
 {In this paper, we introduced the UAV metaverses and investigated the UT real-time migration in UAV metaverses. We focused on the scenario where UAVs offer seamless metaverse services to UMUs by transferring their UTs among the RSUs. To achieve efficient UT migration in UAV metaverses, we proposed a tiny learning-based game approach framework based on the pruning techniques. Specifically, we formulated a multi-leader multi-follower Stackelberg game between RSUs and UAVs, where resource-constrained UAVs act as followers and aim to optimize their bandwidth purchase from RSUs. In contrast, RSUs act as leaders and set the optimal bandwidth price provided to UAVs. In particular, we incorporated a novel metaverse immersion metric into the utility function of UAVs. To quickly find the Stackelberg equilibrium and obtain the optimal strategies for both RSUs and UAVs, we proposed a Tiny MADRL algorithm based on structural pruning techniques, which can reduce the size and complexity of the model by removing redundant neurons or weights. Finally, we conducted experiments to validate the effectiveness and reliability of our proposed scheme.}

 {In the future, our focus will shift toward constructing the utility function of UMUs and delving deeper into the modeling of immersion metrics, possibly innovating new modeling of metaverse immersion. Furthermore, we aim to explore the integration of state-of-the-art techniques with DRL methods to tackle the Stackelberg equilibrium point. For instance, our research will endeavor to explore and investigate the combination of the diffusion model and DRL, as well as the fusion of Quantum techniques with DRL. The explorations will shed light on innovative approaches to address the challenges in achieving the Stackelberg equilibrium point.}


\bibliographystyle{IEEEtran}

\bibliography{ref}

\end{document}